%%%%%%%% ICML 2022 EXAMPLE LATEX SUBMISSION FILE %%%%%%%%%%%%%%%%%

\documentclass[nohyperref]{article}

\usepackage{comment}
% Recommended, but optional, packages for figures and better typesetting:
\usepackage{microtype}
\usepackage{graphicx}
\usepackage{subfigure}
\usepackage{booktabs} % for professional tables
\usepackage{bbm}
\usepackage{physics}

% hyperref makes hyperlinks in the resulting PDF.
% If your build breaks (sometimes temporarily if a hyperlink spans a page)
% please comment out the following usepackage line and replace
% \usepackage{icml2022} with \usepackage[nohyperref]{icml2022} above.
\usepackage{hyperref}

% Attempt to make hyperref and algorithmic work together better:

% Use the following line for the initial blind version submitted for review:
%\usepackage{icml2022nl}
\usepackage[accepted]{icml2022_TAGML}

% If accepted, instead use the following line for the camera-ready submission:
% \usepackage[accepted]{icml2022}

% For theorems and such
\usepackage{amsmath}
\usepackage{amssymb}
\usepackage{mathtools}
\usepackage{amsthm}

% if you use cleveref..
\usepackage[capitalize,noabbrev]{cleveref}

%%%%%%%%%%%%%%%%%%%%%%%%%%%%%%%%
% THEOREMS
%%%%%%%%%%%%%%%%%%%%%%%%%%%%%%%%
\theoremstyle{plain}
\newtheorem{theorem}{Theorem}[section]
\newtheorem{proposition}[theorem]{Proposition}
\newtheorem{lemma}[theorem]{Lemma}
\newtheorem{corollary}[theorem]{Corollary}
\theoremstyle{definition}

\theoremstyle{remark}
\newtheorem{remark}[theorem]{Remark}

% Todonotes is useful during development; simply uncomment the next line
%    and comment out the line below the next line to turn off comments
%\usepackage[disable,textsize=tiny]{todonotes}
\usepackage[textsize=tiny]{todonotes}

%%% commands
\newcommand{\nm}[1]{{\|#1\|}}
\newcommand{\R}{\mathbb R}

\newcommand{\invf}{F^{\dagger}}

% The \icmltitle you define below is probably too long as a header.
% Therefore, a short form for the running title is supplied here:
%\icmltitlerunning{Submission and Formatting Instructions for ICML 2022}

%\title{Geodesic Properties of a Generalized Wasserstein Embedding for Time Series Analysis}
%\author{Shiying Li, Abu Hasnat Mohammad Rubaiyat, Gustavo K. Rohde
%		}

\date{}
\pagestyle{headings}
\begin{document}
%\nolinenumbers
%\maketitle

\twocolumn[
\icmltitle{Geodesic Properties of a Generalized Wasserstein Embedding for Time Series Analysis}

\begin{icmlauthorlist}
%\icmlauthor{Shiying  Li}{bme}
%\icmlauthor{Abu Hasnat Mohammad Rubaiyat}{ece}
%\icmlauthor{Gustavo K. Rohde}{bme,ece}
\icmlauthor{Shiying  Li}{}
\icmlauthor{Abu Hasnat Mohammad Rubaiyat}{}
\icmlauthor{Gustavo K. Rohde}{}
% \icmlauthor{Firstname4 Lastname4}{sch}
% \icmlauthor{Firstname5 Lastname5}{yyy}
% \icmlauthor{Firstname6 Lastname6}{sch,yyy,comp}
% \icmlauthor{Firstname7 Lastname7}{comp}
% %\icmlauthor{}{sch}
% \icmlauthor{Firstname8 Lastname8}{sch}
% \icmlauthor{Firstname8 Lastname8}{yyy,comp}
% \icmlauthor{}{sch}
% \icmlauthor{}{sch}
\end{icmlauthorlist}

%\icmlaffiliation{bme}{Department of Biomedical Engineering, University of Virginia, Charlottesville, VA 22908, USA}
%% \icmlaffiliation{comp}{Company Name, Location, Country}
%\icmlaffiliation{ece}{Department of Electrical and Computer Engineering, University of Virginia, Charlottesville, VA 22904, USA}

\icmlcorrespondingauthor{Shiying Li}{sl8jx@virginia.edu}
\icmlcorrespondingauthor{Abu Hasnat Mohammad Rubaiyat}{ar3fx@virginia.edu}

% You may provide any keywords that you
% find helpful for describing your paper; these are used to populate
% the "keywords" metadata in the PDF but will not be shown in the document
%\icmlkeywords{Machine Learning, ICML}

\vskip 0.3in
]

% this must go after the closing bracket ] following \twocolumn[ ...

% This command actually creates the footnote in the first column
% listing the affiliations and the copyright notice.
% The command takes one argument, which is text to display at the start of the footnote.
% The \icmlEqualContribution command is standard text for equal contribution.
% Remove it (just {}) if you do not need this facility.

%\printAffiliationsAndNotice{}  % leave blank if no need to mention equal contribution
%\printAffiliationsAndNotice{\icmlEqualContribution} % otherwise use the standard text.

% Pat Langley and Andrea Danyluk for ICML-2K. This version was created
% by Iain Murray in 2018, and modified by Alexandre Bouchard in
% 2019 and 2021 and by Csaba Szepesvari, Gang Niu and Sivan Sabato in 2022. 
% Previous contributors include Dan Roy, Lise Getoor and Tobias
% Scheffer, which was slightly modified from the 2010 version by
% Thorsten Joachims & Johannes Fuernkranz, slightly modified from the
% 2009 version by Kiri Wagstaff and Sam Roweis's 2008 version, which is
% slightly modified from Prasad Tadepalli's 2007 version which is a
% lightly changed version of the previous year's version by Andrew
% Moore, which was in turn edited from those of Kristian Kersting and
% Codrina Lauth. Alex Smola contributed to the algorithmic style files.

%\printAffiliationsAndNotice{} % otherwise use the standard text.

\begin{abstract}
Transport-based metrics and related embeddings (transforms) have recently been used to model signal classes where nonlinear structures or variations are present. In this paper, we study the geodesic properties of time series data with a generalized Wasserstein metric and the geometry related to their signed cumulative distribution transforms in the embedding space. Moreover, we show how understanding such geometric characteristics can provide added interpretability to certain time series classifiers, and be an inspiration for more robust classifiers.   
\end{abstract}
%Here we describe certain geometric properties of these metrics.
% They have been shown effective especially in modeling certain nonlinear signal variations.
%for new algorithms interpreting signal class distributions and 
\section{Introduction}
\label{intro}
%[wasserstein distance is good for time series classification following certain generative models ]
%[paragraph: why is geodesic important?]
%
%[how can geodesics help us explain the performance in e.g. classification or discrimination tasks ?]
%[summary of main points]

% [Wasserstein distance is useful and semantically meaningful]
Transport-based distances, such as Wasserstein distances \cite{villani2003topics},  have been shown to be an effective tool in signal analysis and machine learning applications including image retrieval \cite{rubner2000earth} and  registration \cite{haker2004optimal}, modeling biological morphology \cite{ozolek2014accurate, basu2014detecting},  comparing probability distributions \cite{arjovsky2017wasserstein}, and  providing good low-dimensional embeddings of image manifolds \cite{hamm2022wassmap}, to name a few. The success of transport-based distances are in part due to their ability to incorporate information of spatial or time deformations naturally structured in signals \cite{kolouri2017optimal}. For example, they are observed to  correctly recover biologically interpretable and statistically significant differences \cite{basu2014detecting} and  quantify semantic differences between distributions that correlate well with human perception \cite{rubner2000earth}.

% [Coherent structure in the world leads to strong correlations between in- puts (such as between neighboring pixels in images), generating observations that lie on or close to a smooth low-dimensional manifold. To compare and classify such observations—in effect, to reason about the world—depends crucially on modeling the nonlinear geometry of these low-dimensional manifolds. -LLE paper]

% [Wasserstein distances for time series]

Within the set of problems in data science, modeling of time series data is considered a challenging problem. Deformation-based methods such as dynamic time warping \cite{abanda2019review, lines2015time} have been shown successful in enabling the comparison of time series data more meaningfully. In addition to being able to align features from two different time series, Wasserstein-type distances are also true distances that can allow for a low-dimensional representation of dynamical systems in which time series can be classified and statistically analyzed \cite{muskulus2011wasserstein}. 
In recent years, many transport transform-based techniques have been developed to leverage Wasserstein distances and linearized ($L^2$) embeddings to facilitate the application of many standard data analysis \cite{kolouri2017optimal}. In particular, the cumulative distribution transform (CDT), based on the 1D Wasserstein embedding, was introduced in \cite{Park:18} as a means of classifying  normalized non-negative signals, and has been extended to general signed signals via the the signed cumulative distribution transform (SCDT) in \cite{aldroubi2022signed}.

% quantifying the differences in the shapes of their dynamics.
% [Transport based method coupled with generative modeling assumption is useful for time series classifications. talk about subspace classifiers]

 Wasserstein embeddings based on the cumulative distribution transform (CDT) \cite{Park:18,Rubaiyat:20,aldroubi2022signed} have recently emerged as a robust, computationally efficient, and accurate end-to-end classification method for time series (1D signal) classification. They are particularly effective for classifying data emanating from physical processes where signal classes can be modeled as observations of a particular set of template signals  under some unknown, possibly random, temporal deformation or transportation \cite{Park:18,shifat2020radon,rubaiyat2022nearest}. Efforts have been made to explain the success of these models by understanding the geometry of the transform embedding space \cite{Park:18, aldroubi2021partitioning,moosmuller2020linear}, where embedding properties and conditions  when the data class becomes convex and linearly separable in the transform space are studied. In a nutshell, the template-deformation-based generative models capture the nonlinear structure of signals and the nonlinear transport transforms render signal classes that are nonlinear and non-convex into convex sets in transform embedding space (see Figure \ref{fig:scdt_embedding}). 

As the CDT \cite{Park:18} is defined for probability measures (or their associated density functions) which are non-negative and normalized, in this manuscript we elucidate the geometry of general time series (signed, non-normalized) with a generalized Wasserstein metric in the  SCDT embedding space with a $L^2$-type metric. In addition to the convexity and isometric embedding properties of the SCDT \cite{aldroubi2022signed}, which are shared by the CDT, we look at the geodesic properties of the SCDT and illustrate the differences between geometry of data in CDT and SCDT space. In particular, the SCDT embedding $\widehat S\subseteq \big(L^2(s_0)\times\R\big)^2$ is not a geodesic space while geodesics exist between signals within a generative model (see Figure \ref{fig:scdt_embedding}). As a preliminary application, we  provide an interpretation of the nearest (transform) subspace classifiers proposed in \cite{rubaiyat2022nls,rubaiyat2022nearest} through  visualizing  paths between the test signals and their projections to various subspaces. In particular, we illustrate that using the training samples, these classifiers ``correctly" generate a (local) subspace that models the generative clusters (see \eqref{eq:genmodel}) to which the given test signals belong. We hypothesise this knowledge can lead to the design of more efficient and accurate pattern recognition tools and added interpretability of various classifiers. 

\begin{figure}[h!]
    \centering
    \includegraphics[width =7cm,height = 2.8cm]{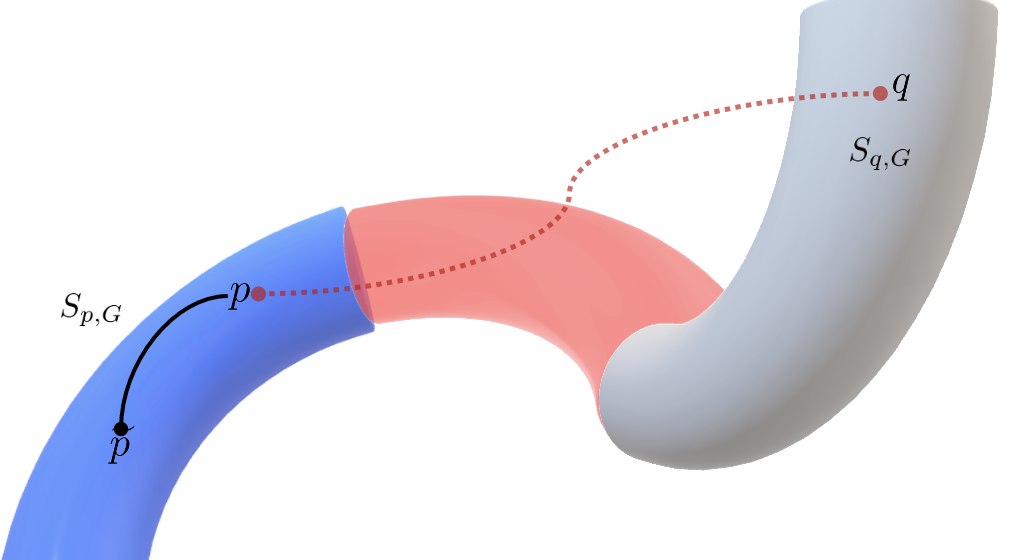}
    \vspace{.5em}
    
    \includegraphics[width=7cm, height = 4.5cm]{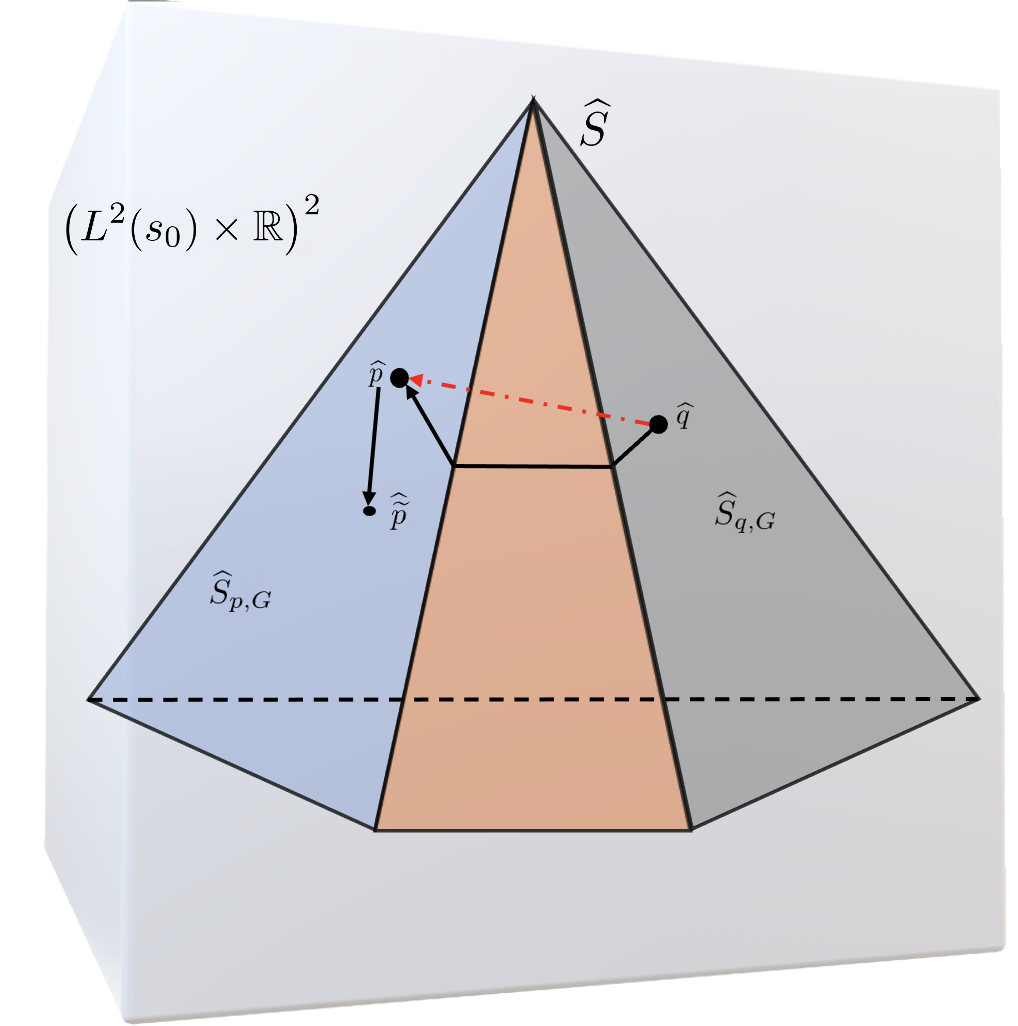}

    \caption{Space of time series with a generalized Wasserstein metric (top) and its embedding $\widehat S$ (bottom) are not geodesic spaces while geodesics exist between any pair of signals in the same generative cluster $S_{p,G}$. Here $\widehat S$ is represented as the faces of the polyhedron (see Remark \ref{rmk:notconvex} and \ref{rmk:unionconvex}) and its ambient space $\big(L^2(s_0)\times\R\big)^2$ is represented as the open cube.}
    \vspace{-1.5em}
    \label{fig:scdt_embedding}
\end{figure}

% \textcolor{red}{[Gustavo: the intro needs a little work. Both in english, but also in summarizing more clearly the main point of the paper. For example, nothing is said about what geometric properties you discovered, nor how to use it for interpretation of classifiers]}

% [Embedding is useful (simplify computations, geometry). Understanding the geometry in embedded space is important; has enabled linearization of nonlinear classification problems \cite{Park:18,kolouri2017optimal}]
% [for images] Wasserstein distances have significant advantages over other choices in terms of recovering image manifold parametrizations and providing good low-dimensional embeddings of image manifolds \cite{hamm2022wassmap}

% \subsection{Paper Organization and Notation}
% The remaining of the paper is organized as follows.  . A short discussion is given in Section \ref{sec:discussion}.

Throughout the manuscript, we work with $L^1$ signals $s$ with finite second moments, \footnote{For bounded $\Omega_s$, it suffices to require that $s\in L^2(\Omega_s)$, in which case by Cauchy-Schwartz $\int_{\Omega_s}t^2|s(t)|dt<\infty$; $s\in L^1(\Omega_s)$ follows from the fact that  $L^2(\Omega_s)\subseteq L^1(\Omega_s)$.} where $\Omega_s\subseteq\mathbb{R}$ is the bounded domain over which $s$ is defined. We denote $S$ as the set of $L^1$ signals with finite second moments and $S_1$  as the set of non-negative $L^1$-normalized signals in $S$  and  $\|s\|$ as the $L^1$-norm of $s$.

\section{Transport Transforms and Geometric Properties of Time Series}
We give a brief overview of the CDT and SCDT and their associated embedding properties in Section \ref{sec:prelims} and present the main theorems about the geodesic properties of time series with respect to  a generalized Wasserstein metric and the geometry of the SCDT embedding  in Section \ref{sec:geodesicthms}. 

\subsection{The Signed Cumulative Distribution Transform and a Generalized Wasserstein Metric}\label{sec:prelims}
The Cumulative Distribution Transform (CDT) was introduced in \cite{Park:18} for non-negative $L^1$-normalized functions. In particular, given non-negative signals $s$ and $s_0$ (a fixed reference) with $\nm{s}=\nm{s_0}=1$, the CDT $s^*$ of $s$ is defined as the optimal transport map  
\footnote{The fact that $s_0$ and $s$ have finite second moments guarantees the existence of a unique optimal transport map between them. } between reference $s_0$ and $s$, which is the unique non-decreasing map \cite{santambrogio2015optimal}
\begin{equation}\label{eq:cdt_alt}
	s^* = \invf_s\circ F_{s_0},
\end{equation} 
where $F_{s_0}$ is the cumulative distribution function of $s_0$ and $\invf_s(x): = \inf\{t\in\R : F(t)\geq x\}$ is  the generalized inverse of $F_s$.\footnote{If $F_s$ is strictly increasing on $\Omega_s$ then $\invf_s = F^{-1}_s$.} In particular, $(s^*)_{\sharp}s_0 = s$ and $s^*$ minimizes the Wasserstein-2 cost  $d_{W^2}(s_0,s):= \sqrt{\inf\limits_{s= T_{\sharp}s_0}\int_{\R} |x-T(x)|^2s_0(x)dx}$ between $s_0$ and $s$. It is well-known \cite{santambrogio2015optimal} that  $\big(S_1, d_{W^2}(\cdot,\cdot)\big)$ is a geodesic space with the constant-speed geodesic between $s$ and $ \tilde s$  given by for  $\alpha \mapsto p_\alpha = \Big((1-\alpha)id+ \alpha T^*\Big)_{\sharp}s$, where $T^*$ is the optimal transport map between $s$ and $\tilde s$ in $S_1$ and $\alpha\in [0,1]$.
Moreover, the CDT defines an isometric embedding from the space of non-negative normalized signals $\big(S_1, d_{W^2}\big)$ to the transform space $\big(S_1^{*}, L^2(s_0)\big)$ (cf. \cite{villani2003topics}), i.e., 
\begin{equation}\label{eq:cdtembedding}
	d_{W^2}(s_1,s_2) = \nm{s_1^*-s_2^*}_{L^2(s_0)}
\end{equation}
where $S_1^{*}$ is the set of CDTs of signals in  $S_1$.
%where $S_1^{*}:= \{s^* : s\in S_1\}$ is the set of CDTs of signals in  $S_1$. 

%The signed cumulative distribution transform (SCDT) \cite{aldroubi2022signed} is an extension of the CDT, which is defined for general finite signed signals with no requirements on the total mass. In particular, for a non-negative signal $s(t)$ with arbitrary mass the SCDT is defined:
%\begin{equation}
%    \widehat{s}(y) = \begin{cases}
%    \left((\frac{s}{\|s\|})^*,\nm{s}\right),& \text{if } s\neq 0\\
%    (0,0),              & \text{if } s=0,
%\end{cases}
%\label{eq:scdt_mass}
%\end{equation}
%where  and $(\frac{s}{\|s\|})^*$ is the CDT (defined in eqn. (\ref{eq:cdt_alt})) of the normalized signal $\frac{s}{\|s\|}$.
Now for a (non-zero) signed signal, the Jordan decomposition \cite{royden1988real} is applied to $s(t) = s^+(t) - s^-(t)$,\footnote{Here $s^+(t)$ and $s^-(t)$ can be seen as the density functions of corresponding measures in the Jordan decomposition of the measure $\mu_s$ associated with $s$ where $d\mu_s(x) := s(x)dx$ .} where $s^+(t)$ and $s^-(t)$ are the absolute values of the positive and negative parts of the signal $s(t)$. Given a fixed $L^1$-normalized positive reference signal $s_0$ defined on $\Omega_{s_0}$, the  signed cumulative distribution transform (SCDT) \cite{aldroubi2022signed} of $s(t) $  is then defined as the following tuple \footnote{When $s=0$, the CDT $s^*$ is defined to be $0$ and the SCDT  $\widehat s := (0,0,0,0)$.} :
\begin{equation}
      \widehat{s} := \Big((\frac{s^+}{\|s^+\|})^*,\nm{s^+},(\frac{s^-}{\|s^-\|})^*,\nm{s^-}\Big),
    \label{eq:scdt}
\end{equation}
%  \widehat{s} := \left(\left(\frac{s^+}{\|s^+\|}\right)^*,\nm{s^+},\left(\frac{s^-}{\|s^-\|}\right)^*,\nm{s^-}\right),
where $(\frac{s^{\pm}}{\|s^{\pm}\|})^*$ is the CDT (defined in eqn. (\ref{eq:cdt_alt})) of the normalized signal $\frac{s^{\pm}}{\|s^{\pm}\|}$. To simplify notations, from now on, for non-negative signals (e.g., $s^{+}$), its CDT is defined as the CDT of its normalized version, i.e., we denote $(\frac{s^+}{\|s^+\|})^*$ simply as $(s^{+})^*$. Equivalently, \eqref{eq:scdt} becomes $ \widehat{s} = \big((s^+)^*,\nm{s^+},(s^-)^*,\nm{s^-}\big)$. 
 Like the CDT, the SCDT is invertible with the inverse transform (see \ref{append:invSCDT} for more details) given by \footnote{Here $(s^{\pm})^*_{\sharp}s_0\ $ denote the push-forward signal of $s_0$ by the maps $(s^{\pm})^*$ respectively. In particular, when the maps $(s^{\pm})^*$ are differentiable, the inverse formula becomes 
 \vspace{-.2cm}
  \begin{align}
    s(t) = &\|s^{+}\|\Big((s^{+})^{*^{-1}}\Big)^{\prime}(t)s_0\Big( (s^{+})^{*^{-1}}(t)\Big) \nonumber\\ 
    &- \|s^{-}\|\Big((s^{-})^{*^{-1}}\Big)^{\prime}(t)s_0\Big( (s^{-})^{*^{-1}}(t)\Big).
\end{align} }
   \begin{align}
 	s = \nm{s^{+}}\Big((s^+)^*_{\sharp}s_0\Big) - \nm{s^{-}}\Big((s^-)^* _{\sharp}s_0\Big).
 \end{align}

Similarly, a generalized Wasserstein-2 distance between two time series can be defined as follows:
\begin{align}
	&D_S(s_1,s_2):= \Bigg(d^2_{W^2}\Big(\frac{s_1^{+}}{\nm{s_1^{+}}},\frac{s_2^{+}}{\nm{s_2^{+}}}\Big)+d^2_{W^2}\Big(\frac{s_1^{-}}{\nm{s_1^{-}}},\frac{s_2^{-}}{\nm{s_2^{-}}}\Big)\nonumber\\
	&+\Big(\nm{s_1^{+}}-\nm{{s}_2^{+}}\Big)^2+\Big(\nm{s_1^{-}}-\nm{s_2^{-}}\Big)^2\Bigg)^{\frac{1}{2}}	\label{eq:genWass}\\
	&= \Bigg(\nm{(s_1^{+})^*-(s_2^{+})^*}^2_{L^2(s_0)}+\nm{(s_1^{-})^*-(s_2^{-})^*}^2_{L^2(s_0)}\nonumber\\
	&+\Big(\nm{s_1^{+}}-\nm{{s}_2^+}\Big)^2+\Big(\nm{s_1^{-}}-\nm{s_2^-}\Big)^2\Bigg)^{\frac{1}{2}}\nonumber\\
	&= \nm{\widehat s_1 - \widehat s_2}_{L^2(s_0)\times 
	\R\times L^2(s_0)\times 
	\R},\label{eq:scdtembedding}
\end{align}
where the second-to-last equality follows from the embedding property \eqref{eq:cdtembedding} of the CDT. Hence the SCDT also defines an isometry (embedding) from the space $\big(S,D_S(\cdot,\cdot)\big)$ of time series  to its transform space $\Big(\widehat S, \nm{\cdot}_{(L^2(s_0)\times \R)^2}\Big)$. 

\subsection{Geodesic Properties}\label{sec:geodesicthms}
%\textcolor{red}{need to fix the statement and proof}
We first present a sufficient condition on a pair of signals under which a geodesic exists between them.
%such that $s_0>0 $ on $\Omega_{s_0}$
\begin{theorem}\label{thm:suff_geodesic}
Fix a reference signal $s_0\in S_1$. Given a signal $s\in S$ and a strictly increasing differentiable function  
 $g: \R\rightarrow\R$, there is a constant speed geodesic between $s$ and $\widetilde s := g^{\prime}s\circ g$. In particular, the geodesic $\alpha\mapsto p_{\alpha}$ is given by $\forall \alpha\in [0,1]$, 
 %s_{(\alpha)} $s_{(0)}= s$ and $s_{(1)}= \widetilde s$
 \begin{align}\label{eq:sz}
  p_{\alpha}& = \big[(1-\alpha)\nm{s^{+}}+\alpha\nm{{\widetilde s}^{+}}\big] \Big((1-\alpha)(s^{+})^* + \alpha({\widetilde s}^{+})^*\Big)_{\sharp}s_0 \nonumber\\
 	-& \big[(1-\alpha)\nm{s^{-}}+\alpha\nm{{\widetilde s}^{-}}\big] \Big((1-\alpha)(s^{-})^* + \alpha({\widetilde s}^{-})^*\Big)_{\sharp}s_0.
 \end{align}
 Note that $p_0= s$ and $p_1= \widetilde s$.
 	% $p_{\alpha} = \big[(1-\alpha)\nm{s^{+}}+\alpha\nm{{\widetilde s}^{+}}\big] \Big((1-\alpha)(s^{+})^* + \alpha({\widetilde s}^{+})^*\Big)_{\sharp}s_0 - \big[(1-\alpha)\nm{s^{-}}+\alpha\nm{{\widetilde s}^{-}}\big] \Big((1-\alpha)(s^{-})^* + \alpha({\widetilde s}^{-})^*\Big)_{\sharp}s_0$. 
 \end{theorem}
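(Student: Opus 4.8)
The plan is to leverage the isometric embedding \eqref{eq:scdtembedding}: since the SCDT is an isometry from $\big(S,D_S\big)$ onto its image $\widehat S\subseteq\big(L^2(s_0)\times\R\big)^2$, and the latter is a Hilbert space in which straight segments realize distances, the entire problem reduces to showing that the signal $p_\alpha$ defined in \eqref{eq:sz} is the inverse SCDT of the segment $(1-\alpha)\widehat s+\alpha\widehat{\widetilde s}$, i.e.\ that $\widehat{p_\alpha}=(1-\alpha)\widehat s+\alpha\widehat{\widetilde s}$. Granting this, for any $\alpha,\beta\in[0,1]$ the isometry gives $D_S(p_\alpha,p_\beta)=\nm{\widehat{p_\alpha}-\widehat{p_\beta}}=|\alpha-\beta|\,\nm{\widehat{\widetilde s}-\widehat s}=|\alpha-\beta|\,D_S(s,\widetilde s)$, which is exactly the defining property of a constant-speed geodesic, and $p_0=s$, $p_1=\widetilde s$ follow by setting $\alpha=0,1$.

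The first step is to compute $\widehat{\widetilde s}$ in terms of $\widehat s$. Since $g$ is strictly increasing we have $g'>0$, so the sign of $\widetilde s=g'\,(s\circ g)$ agrees pointwise with that of $s\circ g$; hence $\widetilde s^{\pm}=g'\,(s^{\pm}\circ g)$, which is precisely the density of the push-forward $(g^{-1})_{\sharp}s^{\pm}$. The change of variables $u=g(t)$ yields $\nm{\widetilde s^{\pm}}=\nm{s^{\pm}}$, so the two norm coordinates of $\widehat{\widetilde s}$ coincide with those of $\widehat s$; the same change of variables gives $F_{\widetilde s^{\pm}}=F_{s^{\pm}}\circ g$ for the cumulative distribution functions of the normalized parts, whence $\invf_{\widetilde s^{\pm}}=g^{-1}\circ\invf_{s^{\pm}}$ and, composing with $F_{s_0}$ as in \eqref{eq:cdt_alt}, the composition property $(\widetilde s^{\pm})^{*}=g^{-1}\circ(s^{\pm})^{*}$. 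Consequently the transport-map coordinate of the segment factors as $(1-\alpha)(s^{\pm})^{*}+\alpha(\widetilde s^{\pm})^{*}=G_\alpha\circ(s^{\pm})^{*}$, where $G_\alpha:=(1-\alpha)\,\mathrm{id}+\alpha\,g^{-1}$.

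Next I identify $p_\alpha$ as a single warped signal. The map $G_\alpha$ is a convex combination of the strictly increasing differentiable maps $\mathrm{id}$ and $g^{-1}$, hence itself strictly increasing and differentiable. Substituting $\nm{\widetilde s^{\pm}}=\nm{s^{\pm}}$ and $(1-\alpha)(s^{\pm})^{*}+\alpha(\widetilde s^{\pm})^{*}=G_\alpha\circ(s^{\pm})^{*}$ into \eqref{eq:sz}, and using $\big(G_\alpha\circ(s^{\pm})^{*}\big)_{\sharp}s_0=(G_\alpha)_{\sharp}\big((s^{\pm})^{*}_{\sharp}s_0\big)=(G_\alpha)_{\sharp}\tfrac{s^{\pm}}{\nm{s^{\pm}}}$, linearity of the push-forward in the measure shows that, as a signed measure, $p_\alpha=(G_\alpha)_{\sharp}s^{+}-(G_\alpha)_{\sharp}s^{-}=(G_\alpha)_{\sharp}\mu_s$.

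The crux is to verify that the Jordan decomposition of $p_\alpha$ is exactly $(p_\alpha)^{\pm}=(G_\alpha)_{\sharp}s^{\pm}$, rather than one in which positive and negative mass partially cancel; only then does the SCDT of $p_\alpha$ recover the segment. This is where the warping hypothesis is essential: $s^{+}$ and $s^{-}$ have disjoint supports, and since $G_\alpha$ is a strictly increasing (hence injective) continuous map, the images $G_\alpha(\mathrm{supp}\,s^{+})$ and $G_\alpha(\mathrm{supp}\,s^{-})$ remain disjoint, so no cancellation occurs. With the decomposition in hand, applying the composition property once more to the warp $G_\alpha$ gives $(p_\alpha^{\pm})^{*}=G_\alpha\circ(s^{\pm})^{*}=(1-\alpha)(s^{\pm})^{*}+\alpha(\widetilde s^{\pm})^{*}$ and $\nm{p_\alpha^{\pm}}=\nm{s^{\pm}}$, so indeed $\widehat{p_\alpha}=(1-\alpha)\widehat s+\alpha\widehat{\widetilde s}$. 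A final routine check confirms $p_\alpha\in S$, since $G_\alpha$ carries the bounded support of $s$ to a bounded set, keeping $p_\alpha$ in $L^1$ with finite second moment. I expect the disjoint-support verification to be the main obstacle, precisely because $\widehat S$ is not convex in general (Remark~\ref{rmk:notconvex}), so an ambient straight segment need not lie in $\widehat S$; the structure $\widetilde s=g's\circ g$ is exactly what forces the segment into the image and makes it realizable by the explicit $p_\alpha$.
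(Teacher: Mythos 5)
Your proof is correct and takes essentially the same route as the paper's: both use the composition property $(\widetilde s^{\pm})^{*}=g^{-1}\circ(s^{\pm})^{*}$, factor the interpolated transport maps as $\big((1-\alpha)\mathrm{id}+\alpha g^{-1}\big)\circ(s^{\pm})^{*}$, verify that mutual singularity of the positive/negative parts is preserved under this strictly increasing map so that \eqref{eq:sz} is the Jordan decomposition of $p_{\alpha}$ and $\widehat{p}_{\alpha}=(1-\alpha)\widehat s+\alpha\widehat{\widetilde s}$, and then conclude constant speed via the isometric embedding \eqref{eq:scdtembedding}. The only differences are cosmetic: you derive the composition property and the singularity-preservation step (disjoint supports under an injective increasing map) from first principles where the paper cites them, and you note explicitly that $\nm{\widetilde s^{\pm}}=\nm{s^{\pm}}$, which collapses the weights in \eqref{eq:sz}.
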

 \begin{proof}
 	It suffices to show that given $\alpha_1,\alpha_2\in [0,1]$, $D_S(p_{\alpha_1},p_{\alpha_2}) = |\alpha_1-\alpha_2| D_S(s,\widetilde s)$.
 	By the composition property of the CDT (see \ref{appendix:composition}), $({\widetilde s}^{\pm})^* = g^{-1}\circ (s^{\pm})^*$. Hence we have that
 	\begin{equation*}
 		(1-\alpha)(s^{\pm})^* + \alpha({\widetilde s}^{\pm})^* = \Big((1-\alpha)id + \alpha g^{-1}\Big)\circ (s^{\pm})^*. 
 	\end{equation*}
 Since $(1-\alpha)id + \alpha g^{-1}$ is strictly increasing and $(s^{+})^*_{\sharp}s_0\perp (s^{-})^*_{\sharp}s_0$ (see \ref{append:invSCDT}), it follows that $\Big((1-\alpha)(s^{+})^* + \alpha({\widetilde s}^{+})^*\Big)_{\sharp}s_0\perp \Big((1-\alpha)(s^{-})^* + \alpha({\widetilde s}^{-})^*\Big)_{\sharp}s_0$ (cf. Lemma 5.4 in \cite{aldroubi2022signed}). By  the inverse formula in Proposition \ref{prop:invSCDT},
 %by the characterization of CDT (see \ref{append:charCDT})
 %$p_{\alpha}^{+}= \big[(1-\alpha)\nm{s^{+}}+\alpha\nm{{\widetilde s}^{+}}\big] \Big((1-\alpha)(s^{+})^* + \alpha({\widetilde s}^{+})^*\Big)_{\sharp}s_0$ and $p_{\alpha}^{-}=\big[(1-\alpha)\nm{s^{-}}+\alpha\nm{{\widetilde s}^{-}}\big] \Big((1-\alpha)(s^{-})^* + \alpha({\widetilde s}^{-})^*\Big)_{\sharp}s_0 $
 		 it is not hard to see  that the expression in \eqref{eq:sz} is the Jordan decomposition of $p_{\alpha}$, i.e., $p_{\alpha}^{\pm}= \big[(1-\alpha)\nm{s^{\pm}}+\alpha\nm{{\widetilde s}^{\pm}}\big] \Big((1-\alpha)(s^{\pm})^* + \alpha({\widetilde s}^{\pm})^*\Big)_{\sharp}s_0$. Hence
 	\begin{align}
 		&D_S^2(p_{\alpha_1},p_{\alpha_2}) =D_S^2(p^{+}_{\alpha_1},p^{+}_{\alpha_2})+D_S^2(p^{-}_{\alpha_1},p^{-}_{\alpha_2})\nonumber\\
 		=&\nm{(p^{+}_{\alpha_1})^*-(p^{+}_{\alpha_2})^*}_{L^2(s_0)}^2+ \nm{(p^{-}_{\alpha_1})^*-(p^{-}_{\alpha_2})^*}_{L^2(s_0)}^2 \nonumber\\ 
 		+&\big( \nm{p_{\alpha_1}^{+}}-\nm{p_{\alpha_2}^{+}}\big)^2 + \big( \nm{p_{\alpha_1}^{-}}-\nm{p_{\alpha_2}^{-}}\big)^2\nonumber\\
 		=&|\alpha_1-\alpha_2|^2\Big(\nm{(s^{+})^*-({\widetilde s}^{+})^*}_{L^2(s_0)}^2+\nm{(s^{-})^*-({\widetilde s}^{-})^*}_{L^2(s_0)}^2\nonumber\\
 		+& \big(\nm{s^{+}}-\nm{{\widetilde s}^+}\big)^2+\big(\nm{s^{-}}-\nm{{\widetilde s}^-}\big)^2\Big)\nonumber\\
 		=& |\alpha_1-\alpha_2|^2D_S^2(s,\widetilde s),\nonumber
 			\end{align}
 			 where the second-to-last equality follows from the fact  that $(p^{\pm}_{\alpha_i})^* = (1-\alpha_i)(s^{\pm})^* + \alpha_i({\widetilde s}^{\pm})^*, i=1,2$ (see  \ref{append:invSCDT} for more details).
 \end{proof}
 \begin{remark}
 	The same conclusion holds if  $\widetilde s := \lambda g^{\prime}s\circ g$ for  any constant $\lambda >0$ and $g$  strictly increasing and differentiable.
 \end{remark}

\begin{corollary}\label{cor:gengeo}
	Let $G=\{g:\R\rightarrow\R: g ~\textrm{is strictly increasing and differentiable}\}$ and $p\in S$. Consider the following template-deformation based generative model  \cite{Park:18}
\begin{equation}\label{eq:genmodel}
    S_{p,G}:=\{p_g= g^{\prime}p\circ g: g\in G \}.
\end{equation}
	Then for any two signals in  $S_{p,G}$, a unique constant-speed geodesic exists with respect to the generalized Wasserstein-2 metric defined in \eqref{eq:genWass}  . 
\end{corollary}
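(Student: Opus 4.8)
The plan is to reduce both assertions to Theorem~\ref{thm:suff_geodesic}: for \emph{existence} I will exhibit one element of $S_{p,G}$ as a $G$-deformation of the other and invoke the theorem directly; for \emph{uniqueness} I will exploit the fact, recorded in \eqref{eq:scdtembedding}, that the SCDT is an isometry from $(S,D_S)$ \emph{onto a subset of a Hilbert space}, where geodesics are forced to be unique.

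For existence, I would first note that $G$ is a group under composition: the composite of two strictly increasing differentiable maps is again such (chain rule), and the inverse $g^{-1}$ of $g\in G$ is strictly increasing and differentiable (inverse function theorem). Given $p_{g_1}=g_1^{\prime}\,p\circ g_1$ and $p_{g_2}=g_2^{\prime}\,p\circ g_2$, I set $h:=g_1^{-1}\circ g_2\in G$ and verify by a direct chain-rule computation that $p_{g_2}=h^{\prime}\,p_{g_1}\circ h$: indeed $g_1\big(h(t)\big)=g_2(t)$, so the factor $g_1^{\prime}\big(g_1^{-1}(g_2(t))\big)$ produced by $p_{g_1}\circ h$ cancels the reciprocal factor inside $h^{\prime}(t)=g_2^{\prime}(t)/g_1^{\prime}\big(g_1^{-1}(g_2(t))\big)$, leaving exactly $g_2^{\prime}(t)\,p(g_2(t))$. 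Applying Theorem~\ref{thm:suff_geodesic} with base signal $p_{g_1}$ and deformation $h$ then yields a constant-speed geodesic $\alpha\mapsto p_\alpha$ between $p_{g_1}$ and $p_{g_2}$, given explicitly by \eqref{eq:sz}.

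For uniqueness, let $\alpha\mapsto q_\alpha$ be any constant-speed geodesic between $p_{g_1}$ and $p_{g_2}$ in $(S,D_S)$, and write $H:=\big(L^2(s_0)\times\R\big)^2$ for the ambient Hilbert space. Since the SCDT is an isometry onto $\widehat S\subseteq H$ by \eqref{eq:scdtembedding}, the image curve satisfies $\nm{\widehat{q}_{\alpha_1}-\widehat{q}_{\alpha_2}}=|\alpha_1-\alpha_2|\,\nm{\widehat{p}_{g_1}-\widehat{p}_{g_2}}$ for all $\alpha_1,\alpha_2\in[0,1]$, i.e.\ it is a constant-speed geodesic for the \emph{ambient} Hilbert norm. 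Because a Hilbert norm is strictly convex, the equality case of the triangle inequality forces every $\widehat{q}_\alpha$ onto the segment joining the endpoints, and matching speeds gives $\widehat{q}_\alpha=(1-\alpha)\widehat{p}_{g_1}+\alpha\,\widehat{p}_{g_2}$. A short computation (or inspection of the construction in Theorem~\ref{thm:suff_geodesic}) shows this is precisely the SCDT image of the curve \eqref{eq:sz}, so by injectivity of the SCDT we conclude $q_\alpha=p_\alpha$ for all $\alpha$.

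I expect the genuinely delicate point to be the group structure of $G$: asserting that $g_1^{-1}$ is \emph{differentiable} silently requires $g_1^{\prime}>0$ rather than mere strict monotonicity (for instance $g_1(x)=x^3$ is strictly increasing and differentiable yet has a non-differentiable inverse). I would handle this either by restricting $G$ to strictly increasing diffeomorphisms, or by carrying the argument at the level of CDTs via the composition property $(\cdot)^*\mapsto g^{-1}\circ(\cdot)^*$, where only the (generalized) inverse is needed. A secondary subtlety---whether the straight segment forced by the Hilbert argument actually lies inside $\widehat S$---is already resolved by the existence step, since \eqref{eq:sz} has SCDT image exactly $(1-\alpha)\widehat{p}_{g_1}+\alpha\,\widehat{p}_{g_2}$.
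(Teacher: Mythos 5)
Your proposal is correct, and its existence step is exactly the paper's proof: write $p_{g_2}=h^{\prime}\,p_{g_1}\circ h$ with $h=g_1^{-1}\circ g_2$ and invoke Theorem~\ref{thm:suff_geodesic}. You go beyond the paper's stated proof in two ways, both worth noting. First, the paper's proof of this corollary is silent on the word ``unique'' (Theorem~\ref{thm:suff_geodesic} itself only asserts existence); uniqueness is handled elsewhere in the paper (Remark~\ref{rmk:geodesicform}, Proposition~\ref{prop:geoconvex}), and your argument --- push any constant-speed geodesic through the isometry \eqref{eq:scdtembedding}, use strict convexity of the ambient norm (Corollary~\ref{cor:unigeo}) to force its image onto the segment $(1-\alpha)\widehat p_{g_1}+\alpha\widehat p_{g_2}$, then pull back by bijectivity of the SCDT (Remark~\ref{rmk:bijection}) --- is precisely that mechanism, made explicit for this corollary. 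Second, the differentiability caveat you flag is a genuine issue in the paper's own proof, not just in yours: the paper justifies applying Theorem~\ref{thm:suff_geodesic} by noting only that $g_1^{-1}\circ g_2$ is ``strictly increasing,'' yet the theorem's hypothesis (and your chain-rule verification) also requires differentiability of $h$, which can fail when $g_1^{\prime}$ vanishes somewhere (e.g.\ $g_1(t)=t^3$), since $G$ as defined is not closed under inversion. Your proposed repair --- work in the transform domain, where the composition property gives $\widehat p_{g_i}$ in terms of $g_i^{-1}\circ(p^{\pm})^*$ and convex combinations of these monotone maps remain in $\widehat S$ --- is exactly the route the paper itself indicates in Remark~\ref{rmk:convexity} via Proposition~\ref{prop:geoconvex}, and it closes the gap without ever needing $h\in G$.
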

\begin{proof}
	Let $p_i = g_i^{\prime}p\circ g_i\in S_{p,G}, i= 1,2$. It follows that $p_2 = (g_1^{-1}\circ g_2)^{\prime}p_1\circ (g_1^{-1}\circ g_2)$. Since $g_1^{-1}\circ g_2$ is also strictly increasing, by Theorem \ref{thm:suff_geodesic}, there exists a constant speed geodesic between $p_1$ and $p_2$.
\end{proof}

\begin{remark}\label{rmk:convexity}
More generally, one can show the existence of geodesics between pairs of signals in any subset $U$ of $S$ such that $\widehat U$ is convex, the proof of which is presented in \ref{prop:geoconvex}. 
By the convexity property of the SCDT \cite{aldroubi2022signed}, $\widehat S_{p,G}$ is convex if and only if $G^{-1}$ is convex. Hence 
	Corollary \ref{cor:gengeo} can be seen as a special case of Proposition \ref{prop:geoconvex} since the set $G=G^{-1}$ is convex for $G=\{g:\R\rightarrow\R: g ~\textrm{is strictly increasing and differentiable}\}$. 
\end{remark}

Next we show that a geodesic may not exist between arbitrary two signals in $S$ with the generalized Wasserstein metric. 
 \begin{theorem}
 	The metric space $\big(S,D_S(\cdot,\cdot)\big)$ is not a geodesic space.
 \end{theorem}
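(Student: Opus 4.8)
The plan is to exploit the isometric embedding $s\mapsto \widehat s$ of $(S,D_S)$ into the Hilbert space $H:=(L^2(s_0)\times\R)^2$ established in \eqref{eq:scdtembedding}, together with the failure of $\widehat S\subseteq H$ to be convex. Recall that a metric space is geodesic only if every pair of points admits a \emph{metric midpoint}, i.e.\ a point $m$ with $D_S(s_1,m)=D_S(m,s_2)=\tfrac12 D_S(s_1,s_2)$; indeed, the midpoint of any constant-speed geodesic is such a point. Hence it suffices to produce a single pair $s_1,s_2\in S$ admitting no metric midpoint in $S$.

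Under the embedding, a metric midpoint of $s_1,s_2$ corresponds to a point $\widehat m\in\widehat S$ with $\nm{\widehat m-\widehat s_1}_{(L^2(s_0)\times\R)^2}=\nm{\widehat m-\widehat s_2}_{(L^2(s_0)\times\R)^2}=\tfrac12\nm{\widehat s_1-\widehat s_2}_{(L^2(s_0)\times\R)^2}$. Since $H$ is a Hilbert space, and hence strictly convex, the only point of $H$ satisfying these two equalities is the linear midpoint $\tfrac12(\widehat s_1+\widehat s_2)$: the two half-distances sum to $\nm{\widehat s_1-\widehat s_2}$, so the triangle inequality is tight, strict convexity forces $\widehat m$ onto the segment $[\widehat s_1,\widehat s_2]$, and the equal distances then pin it to the midpoint. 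Consequently $s_1,s_2$ admits a metric midpoint (let alone a geodesic) only if $\tfrac12(\widehat s_1+\widehat s_2)\in\widehat S$. The task therefore reduces to exhibiting a pair whose linear midpoint escapes $\widehat S$, which is where the non-convexity of $\widehat S$ --- caused by the mutual-singularity constraint $(s^+)^*_{\sharp}s_0\perp (s^-)^*_{\sharp}s_0$ on admissible tuples (see \ref{append:invSCDT}) --- enters.

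For the witness, fix $s_0=\mathbbm{1}_{[0,1]}$ and take $s_1=\mathbbm{1}_{[0,1/2]}-\mathbbm{1}_{[1/2,1]}$ and $s_2=-s_1$, so that $s_2^{\pm}=s_1^{\mp}$. A direct computation of the transport maps to the reference gives $(s_1^+)^*(x)=x/2$ and $(s_1^-)^*(x)=x/2+1/2$, whence $(s_2^+)^*=(s_1^-)^*$ and $(s_2^-)^*=(s_1^+)^*$. The first and third components of $\tfrac12(\widehat s_1+\widehat s_2)$ then coincide, both equal to $\tfrac12\big((s_1^+)^*+(s_1^-)^*\big)=x/2+1/4$, so the positive and negative pushforwards of the midpoint tuple are one and the same nonzero measure --- the uniform measure on $[1/4,3/4]$ --- which is not mutually singular with itself. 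Hence $\tfrac12(\widehat s_1+\widehat s_2)\notin\widehat S$, no metric midpoint of $s_1,s_2$ exists in $S$, and $(S,D_S)$ is not a geodesic space. The main obstacle is precisely the membership test for the linear midpoint: one must identify the Jordan-decomposition (disjoint-support) constraint as the exact feature the midpoint violates, while the strict-convexity step is what rules out any alternative, non-linear midpoint that might otherwise rescue the geodesic.
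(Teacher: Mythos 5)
Your proof is correct and follows essentially the same route as the paper's: both exploit the isometric SCDT embedding into $(L^2(s_0)\times\R)^2$, use strict convexity (unique geodesicity) of that space to force any would-be geodesic onto the linear segment, and then contradict membership of the segment's midpoint in $\widehat S$ via the mutual-singularity constraint, using the same witness pair $s$ and $-s$ built from indicator functions. The only cosmetic difference is that you phrase the obstruction through metric midpoints while the paper runs the contradiction on the full geodesic path; the key computation is identical.
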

 \begin{proof}
 Recall that a metric space is geodesic means that for any two given elements, there exists a path between  such that the length of the path equals the distance between them. We prove this theorem by giving an example where the length of any path between some $s_1,s_2\in S$ is larger than their distance $D_S(s_1,s_2)$.   Let $s_1 = \mathbbm{1}_{[-1,0)}-\mathbbm{1}_{[0,1]}$ and $s_2 = \mathbbm{1}_{[0,1]}-\mathbbm{1}_{[-1,0)}$. \footnote{Here $\mathbbm{1}_{[a,b]}$ denotes the indicator function of interval $[a,b]$.} Assume by contradiction that there is a path $\gamma: [0,1]\rightarrow S$ where $\gamma_0 = s_1$ and $\gamma_1 = s_2$ such that $\textrm{Len}(\gamma)= D_S(s_1,s_2)$. By the embedding property \eqref{eq:scdtembedding} of the SCDT,  it follows that  $\widehat\gamma_{\alpha}$($ \alpha\in [0,1]$) defines a geodesic in $\widehat S\subseteq (L^2(s_0)\times \R)^2 $. Since the space $(L^2(s_0)\times \R)^2$ is uniquely geodesic (see \ref{cor:unigeo}), then $\widehat \gamma_{\alpha} = (1-\alpha)\widehat s_1+\alpha \widehat s_2$. Let $\widehat s_1 = (f_1^{+},1,f_1^{-},1)\in \widehat S$ and it is not hard to see that $\widehat s_2 = (f_1^{-},1,f_1^{+},1)$. Hence $\widehat \gamma_{1/2}:= (f_{1/2}^{+}, a,f_{1/2}^{-},b )= (\frac{f_1^{+}+f_1^{-}}{2},1,\frac{f_1^{+}+f_1^{-}}{2},1)$, which is a contradiction  to the fact that $\widehat \gamma_{1/2}\in \widehat S$ (since  by definition $(f_{1/2}^{+})_{\sharp}s_0$ and $(f_{1/2}^{-})_{\sharp}s_0$ should be mutually singular, see \ref{append:invSCDT}). 
 \end{proof}
 \begin{remark}\label{rmk:geodesicform}
 	Using the fact that the space $(L^2(s_0)\times \R)^2$ is uniquely geodesic, one can show that if the geodesic in  $\big(S,D_S(\cdot,\cdot)\big)$ exists between two signals $s$ and $\widetilde s$, it should be of the form \eqref{eq:sz}.
 \end{remark}
 \begin{remark}\label{rmk:notconvex}
 	Observing from Proposition \ref{prop:geoconvex}, it is not hard to see that $\widehat S$ is not convex. This is in contrast to the fact the embedding CDT space $S_1^*$  is convex \cite{aldroubi2021partitioning}.
 \end{remark}
 %where $S_1^*$ denotes the set of CDTs of non-negative $L^1$-normalized signals in $S$.
 \begin{remark}\label{rmk:unionconvex}
 The set $G$ in Corollary \ref{cor:gengeo} is a group with  the group operation being composition and  defines an equivalence relation in $S$ via $s\sim \widetilde s$ if and only if $\widetilde s = g^{\prime}s\circ g$ for some $g\in G$. It follows that $S$ can be partitioned into disjoint clusters $\bigcup_{i} S_{p_i,G}$ (see \eqref{eq:genmodel}). Hence by Remark \ref{rmk:convexity} $\widehat S$ can be partitioned as a union of convex subsets  $\bigcup_{i}\widehat{ S}_{p_i,G}$.
 \end{remark}
 
 \subsection{Examples of Geodesics}\label{sec:geodesicexamples}
% ex1-2 toy example to give intuition of geodesics 
In this section, we give two examples (Figure \ref{fig:TE1-2}) where a unique geodesic exists between the given signals and two examples (Figure \ref{fig:TE5-6}) where a geodesic does not exist. By Remark \ref{rmk:geodesicform}, the geodesic $p_{\alpha}$ for $\alpha\in [0,1]$ is of the form \eqref{eq:sz} when it exists. In the case when a geodesic does not exist, we plot a path in $S$ according to \eqref{eq:sz}.  For a geodesic $p_{\alpha}$ with $p_0=s$ and $p_1 = \widetilde s$, it follows by definition that $\sum_{i=1}^nD_S(p_{\alpha_{i-1}},p_{\alpha_{i}})= D_S(s,\tilde s)$ for  $0=\alpha_0<\alpha_1<\cdots<\alpha_n=1$; while $\sum_{i=1}^nD_S(p_{\alpha_{i-1}},p_{\alpha_{i}})> D_S(s,\tilde s)$ if $\alpha \mapsto p_{\alpha}, \alpha\in [0,1]$ is not a geodesic path.  In particular, in the following plots the signals $p_{\alpha_i}$ where $\alpha_i = \frac{i}{4}, i = 0,\cdots,4$ are presented and distances $D_i := D_S(p_{\alpha_{i-1}}, p_{\alpha_{i}}), i = 1,2,3,4$  and $D:=D_S(s,\tilde s) $ are shown. 
% In Figure \ref{fig:TE1-2}, we show the geodesic in $\big(S,D_S(\cdot,\cdot)\big)$ between the zero signal and a given signal $s\in S$ (top panel) \footnote{By a direct computation (see \ref{sec:zerogeod}), we have that the geodesic is given by $p_{\alpha}(t)= s(\frac{1}{\alpha} t)$ for $\alpha\in (0,1]$ and $p_0:= 0$.} and the geodesic between $s$ and $\widetilde s = g^{\prime}s\circ g$ where $g$ is a strictly increasing differentiable function (bottom panel). Figure \ref{fig:TE5-6} shows two examples of signals when a geodesic does not exist.

\vspace{-1em}

\begin{figure}[h!]
    \centering
    \includegraphics[width=0.49\textwidth]{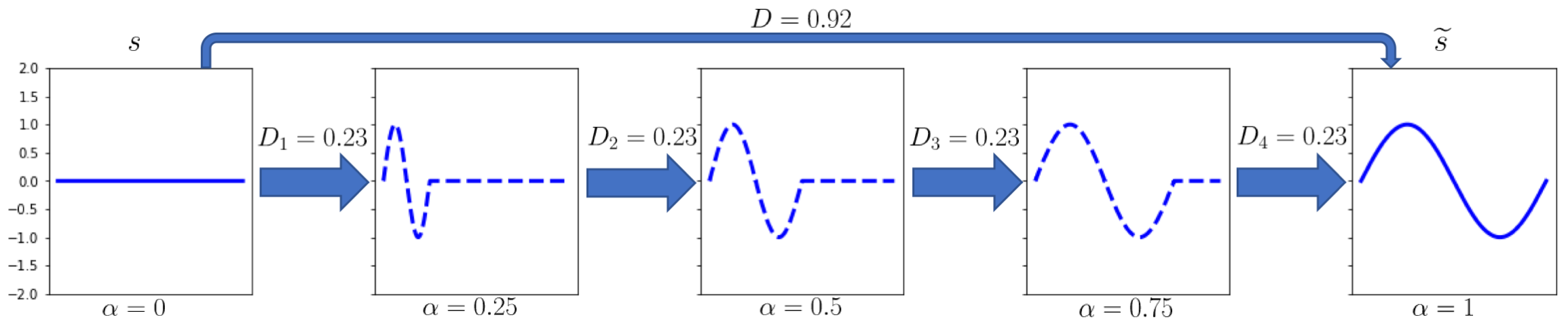}
    \includegraphics[width=0.49\textwidth]{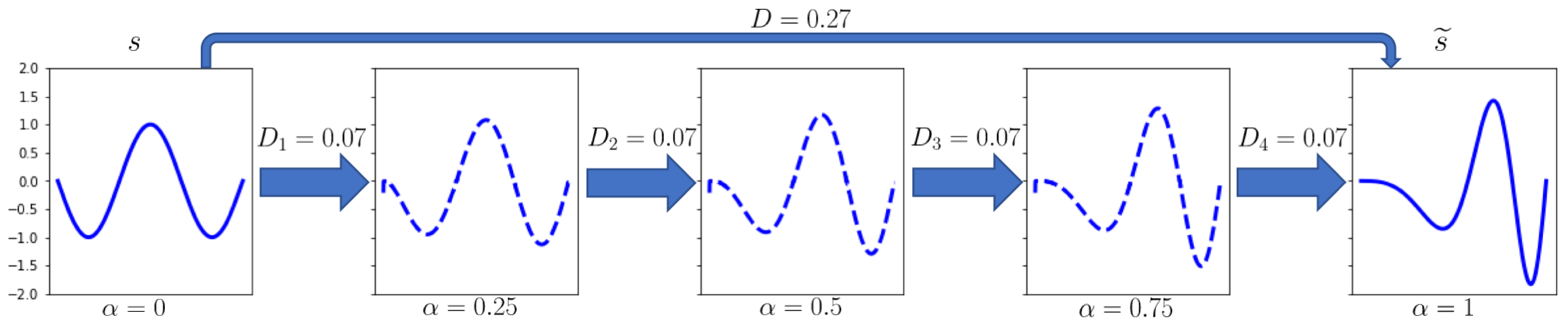}
    \vspace{-1.5em}
    \caption{Examples of geodesics: (top) the geodesic between the zero signal and $s(t)= \sin(2\pi t)\mathbbm{1}_{[0,1]}(t)$ where $p_{\alpha}(t)= \sin(2\pi \frac{1}{\alpha} t)\mathbbm{1}_{[0,\alpha]}(t)$.\footnotemark ~ (bottom) the geodesic between $s(t)= -\sin(3\pi t)\mathbbm{1}_{[0,1]}(t)$ and $\widetilde s = g^{\prime}s\circ g$ where $g(t)= t^2$. In both examples, $D=\sum\limits_{i=1}^4D_i$.}
    \label{fig:TE1-2}
\end{figure}
\footnotetext{By a direct computation (see \ref{sec:zerogeod}), we have that the geodesic is given by $p_{\alpha}(t)= s(\frac{1}{\alpha} t)$ for $\alpha\in (0,1]$ and $p_0:= 0$.}

\vspace{-1.5em}
\begin{figure}[h!]
    \centering
    \includegraphics[width=0.49\textwidth]{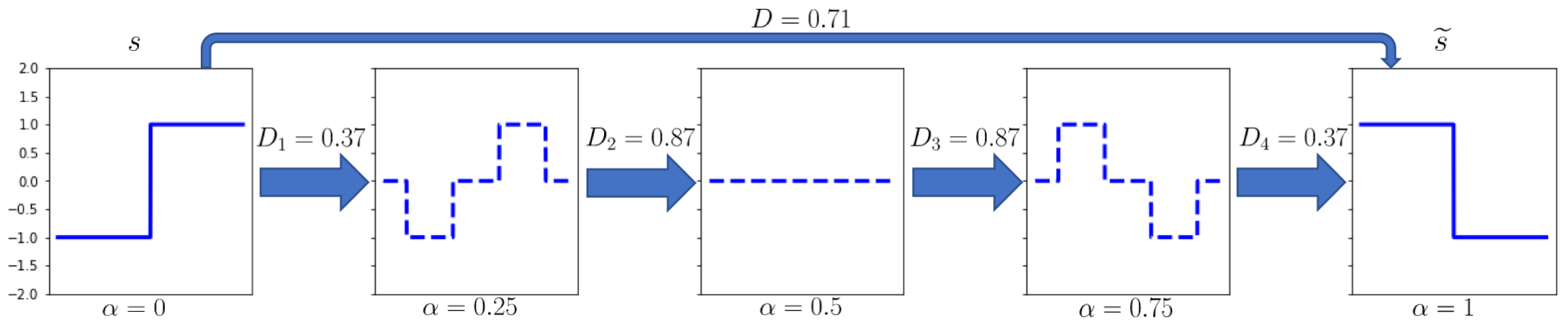}
    \includegraphics[width=0.49\textwidth]{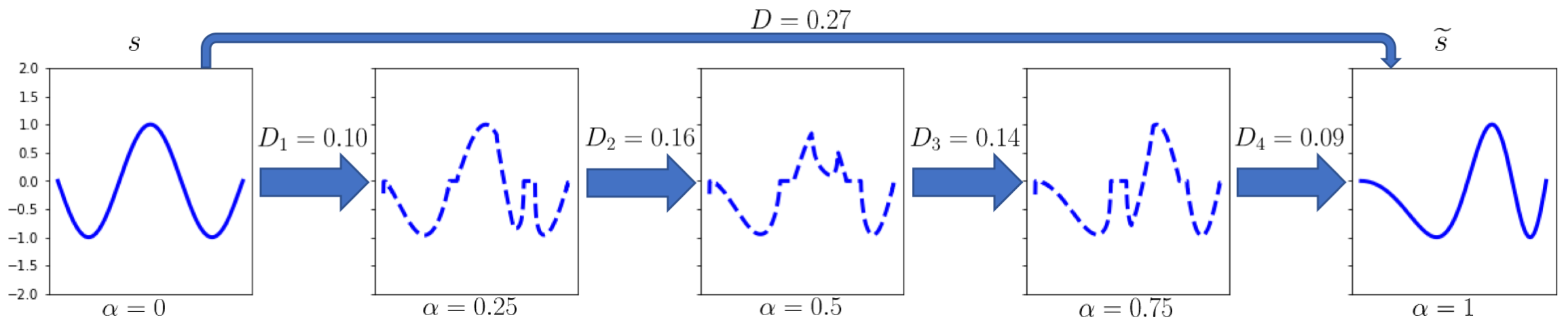}
    \vspace{-1.5em}
    \caption{Examples where geodesics do not exist: (top) signal path $p_{\alpha}$ defined by \eqref{eq:sz}  between $s= -\mathbbm{1}_{[0,.5]}+\mathbbm{1}_{(.5,1]}$ and $\widetilde s = \mathbbm{1}_{[0,.5]}- \mathbbm{1}_{(.5,1]}$; for this path  $D=.71\ll \sum\limits_{i=1}^4D_i = 2.48$. (bottom) signal path $p_{\alpha}$ defined by \eqref{eq:sz} between $s$ and $\widetilde s = s\circ g$ where $g(t)= t^2$ and $s(t)= -\sin(3\pi t)\mathbbm{1}_{[0,1]}(t)$. Note the $\widetilde s$ here differs from that in second example of Figure \ref{fig:TE1-2} (there is no normalizing term $g^{\prime}(t)$ here); for path  $D=.27\ll \sum\limits_{i=1}^4D_i = .49$.}
    \label{fig:TE5-6}
\end{figure}

% \begin{figure}[h!]
%     \centering
%     \includegraphics[width=0.49\textwidth]{images/Slide4}
%     %\vspace{-1.5em}
%     \caption{Signal $p_{\alpha}$ on the path defined by \eqref{eq:sz}  between $s= -\mathbbm{1}_{[0,.5]}+\mathbbm{1}_{(.5,1]}$ and $\widetilde s = \mathbbm{1}_{[0,.5]}- \mathbbm{1}_{(.5,1]}$. One can verify that this path is not a geodesic via $D=.71< \sum\limits_{i=1}^4D_i = 2.48$.}
%           \label{fig:TE4}
% \end{figure}
% \begin{figure}[h!]
%     \centering
%     \includegraphics[width=0.49\textwidth]{images/Slide7}
%     %\vspace{-1.5em}
%     \caption{Signal $p_{\alpha}$ on the on the path defined by \eqref{eq:sz} between $s$ and $\widetilde s = s\circ g$ where $g(t)= t^2$ and $s(t)= -\sin(3\pi t)\mathbbm{1}_{[0,1]}(t)$. Note the $\widetilde s$ here differs from that in Figure \ref{fig:TE3-4} (there is no normalizing term $g^{\prime}(t)$ here). One can verify that this path is not a geodesic via   $D=.27< \sum\limits_{i=1}^4D_i = .49$.}
%           \label{fig:TE7}
% \end{figure}

%\textcolor{red}{add a paragraph about (generalized) Wasserstein distance }
\section{A Preliminary Application : Interpreting the SCDT Subspace Classifier}\label{sec:interpretability}
% In the previous section, we discussed the geometric properties of time series data in SCDT space.
In this section, we attempt to utilize the preliminary geodesic properties discussed above to partially interpret the decisions made by the SCDT subspace classifiers proposed in \cite{rubaiyat2022nearest} and \cite{rubaiyat2022nls}, which were shown to achieve very high accuracy in classifying segmented time series events.
% \textcolor{blue}{In particular, we would like to  understand whether the classifiers do a good job in using training samples to generate a (local) subspace that contains the test signals in the transform space through visualizing geodesics and paths between the test signals and their projections to corresponding subspaces and hence make sense of classifiers' decisions of predicting the label of a test signal via a nearest (local) subspace search across all classes}. 

%subspace classifier model the corresponding signal clusters (e.g., $S_{\varphi_m^{(c)},G_m^{(c)}}$) 
% subspaces represent the signal classes and how the subspace classifiers make their decisions through visualizing geodesics and paths between test signals and their projections to corresponding subspaces. 

\subsection{SCDT subspace classifiers}
The signed cumulative distribution transform (SCDT) combined with a subspace classifier has recently been shown very effective in classifying time series data \cite{rubaiyat2022nearest,rubaiyat2022nls}. In \cite{rubaiyat2022nearest}, the authors employed a nearest subspace search technique in SCDT space to classify time series events that follow a certain generative model:
$ S^{(c)}=\{s_j^{(c)}|s_j^{(c)}=g'_j\varphi^{(c)}\circ g_j, g_j\in G^{(c)}, g'_j>0\}$,
where $s_j^{(c)}$ a signal in class $c$ deformed from $\varphi^{(c)}$ (a template pattern corresponding to  class c), and $G^{(c)}$ denotes a set of increasing deformations of a specific kind (e.g. translation, scaling, etc.).  In the embedding (transform) space, each signal class is hypothesized to be modeled well by the containing subspace $\widehat {V}^{(c)}= \textrm{span}\big(\widehat{S}^{(c)}\big) $, and  the corresponding classifier  searches for the nearest subspace to the test sample to predict its class label.
%Meaning the data from a particular class can be modeled as instances of a particular template under some time deformations specified by $G^{(c)}$

An extension of the SCDT nearest subspace approach was proposed in \cite{rubaiyat2022nls}, where a more general multi-template generative model was used, assuming that a signal class is generated from a set of templates under some unknown deformations. Formally, the multi-template generative model for signal class $c$ is defined to be the set: $S^{(c)} = \bigcup\limits_{m=1}^{M_c} S_{\varphi_m^{(c)},G_m^{(c)}}$, where $S_{\varphi_m^{(c)},G_m^{(c)}}=\left\{s_{j,m}^{(c)}|s_{j,m}^{(c)}=g'_{j}\varphi_m^{(c)}\circ g_{j}, g'_{j}>0, g_{j}\in G_m^{(c)}\right\}$
and $\left(G_m^{(c)}\right)^{-1}=\left\{\sum_{i=1}^k\alpha_i f_{i,m}^{(c)}, \alpha_i\geq 0\right\}$ lies in a linear space of deformations. Each signal class is hypothesized to be modeled well by a union of subspaces $ \bigcup\limits_{m=1}^{M_c} \widehat{{V}}^{(c)}_m $ in the embedding (transform) space, where $\widehat{V}^{(c)}_m =\textrm{span}\big(\widehat{S} _{\varphi_m^{(c)},G_m^{(c)}}\big)$. The corresponding classifier searches for the nearest local subspace $\widehat{V}^{(c)}_{m*}$ to the test sample to predict the class label. 

\subsection{Interpretation of the SCDT Subspace Classifiers}
As explained in the section above, the SCDT  subspace classifiers search for the nearest subspace $\widehat{V}^{(c)}$ \cite{rubaiyat2022nearest} or local subspace $\widehat{V}^{(c)}_{m*}$ \cite{rubaiyat2022nls} to a given test signal in SCDT (embedding) space. In particular, for the nearest subspace classifier, it can be shown that under certain assumptions,  $d^2(\widehat{s},\widehat{V}^{(c)})<d^2(\widehat{s},\widehat{V}^{(p)})$ for $s\in S^{(c)}$ and $p\neq c$, where 
$d^2(\cdot,\cdot)$ denotes the $L^2$-metric. Similar properties hold for the nearest local subspace classifier. The class labels are then predicted  by searching for the (local) subspace with the smallest distance to the test signal.

In this paper, we provide a preliminary interpretation of the subspace classifiers using a synthetic dataset and a real dataset \textit{StarLightCurves} \cite{rebbapragada2009finding} hosted by the UCR time series classification archive \cite{UCRArchive2018}. In particular, we present a visualization of the paths between a test signal and its projections onto the subspaces associated with different classes.
The projection $\widetilde s$ of a test signal $s$ onto the subspace $\widehat{V}^{(c)}$ associated with class $c$ is defined in the sense that $\widehat {\widetilde s}:= P_{\widehat{V}^{(c)}}\widehat s$, where $P_{\widehat{V}^{(c)}}$ denotes the $L^2$-projection operator to the subspace $\widehat{V}^{(c)}$. 

If the classifier can successfully model a subspace (using training data) that contains a test signal $s$, then the projection $\widetilde s=s$.
%If the classifier uses the training samples to successfully generate a subspace that contains a test signal $s$, its projection $\widetilde s=s$. 
%\textcolor{blue}{In practice, we observe that the path defined by \eqref{eq:sz} between a test signal $s$ and its projection $\widetilde s$ to a subspace associated with the same class as $s$ is consisted  of signals of similar shapes (or look like a geodesic within the same class) and $s\approx \widetilde s $ (see e.g., first and fourth row in Figure \ref{fig:real_classify1}) while the path between $s$  and its projection to a subspace associated with a different class seems to exhibit unpredicted behaviour (the intermediate signals along the path belong to neither classes, see e.g., second row and third row in Figure \ref{fig:real_classify1})}. 
In practice, we observe that the path defined by \eqref{eq:sz} between a test signal $s$ and its projection $\widetilde s$ to a subspace associated with the same class as $s$ consists of signals of similar shapes (or looks like a geodesic lying in the same class) and $s\approx \widetilde s $. On the other hand, the path between $s$  and its projection to a subspace associated with a different class seems to exhibit unpredictable behavior.

% The SCDT nearest subspace classifier \cite{rubaiyat2022nearest} was proposed based on the hypothesis that for a test sample $s$ generated according to the generative model for class $c$, $d^2(\widehat{s},\widehat{V}^{(c)})\sim 0$, and $d^2(\widehat{s},\widehat{V}^{(p)})>0$ for $p\neq c$.
%   This method was extended in \cite{rubaiyat2022nls} to propose a more generic classifier which searches for a local subspace nearest to the test sample $\widehat{s}$ instead of forming a single subspace for each class. In this case, $\widehat{V}^{(c)}$ in $d^2(\widehat{s},\widehat{V}^{(c)})$ will be replaced by a local subspace $\widehat{V}^{(c)}_m$ (nearest to $\widehat{s}$) corresponding to the $m$-th template from $c$-th class.
\begin{figure}[tb]
    \centering
    \includegraphics[width=0.49\textwidth]{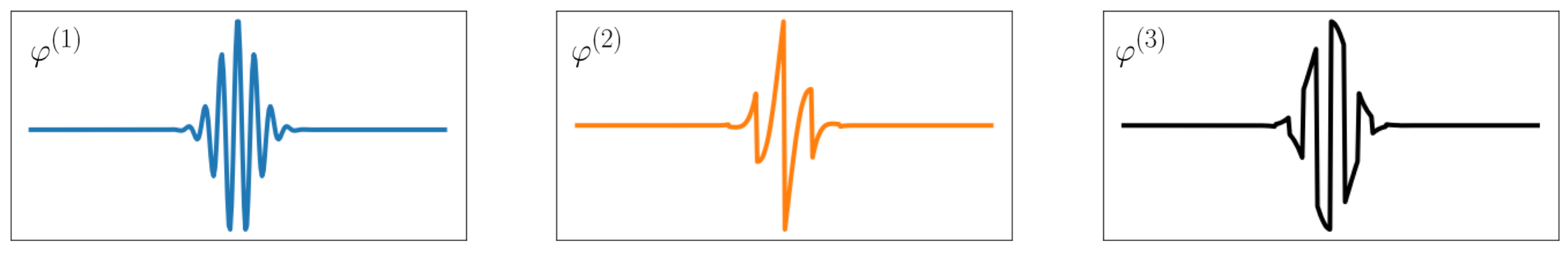}
    \vspace{-1.5em}
    \caption{Three prototype (template) signals used to generate synthetic data in Experiment 1.}
    \label{fig:sig_proto}
    \vspace{-1.5em}
\end{figure}
\textit{Experiment 1:} For this experiment, we created a synthetic dataset of three classes consisting of scaled and translated signals of three template signals: Gabor wave, apodized sawtooth wave, and apodized square wave, respectively (shown in Figure \ref{fig:sig_proto}). The training samples were generated following the generative model given by:
% \begin{align*}
   $ S^{(c)} = \{g'\varphi^{(c)}\circ g|g\in G\},$ where $ G=\{g = \omega t + \tau,\omega>0,\tau\in\mathbb{R}\}$
    % \label{eq: gt_exp1}
and $\varphi^{(c)}(t)$ denotes the template corresponding to class $c$ (see Figure \ref{fig:sig_proto}). Here the test sample (not present in training set) follows the generative model for class 1 (Gabor wave).  The signal paths $p_\alpha$ between the test signal and its projections onto three subspaces formed via the subspace classifier \cite{rubaiyat2022nearest} are shown in Figure \ref{fig:toy_classify1}. We observe that the projection onto class 1 resembles the test signal and the corresponding path looks like a geodesic (i.e., the ratio $\sum\limits_{i=1}^4D_i/D\approx 1$),
indicating the subspace $\widehat{V}^{(1)}$ generated by the subspace classifier contains the SCDTs of the test signal $s$ and its cluster $S_{s,G}$ \footnote{Since the same phenomenon is observed for any random test signal following the generative model $S_{\varphi^{(1)},G}$, which equals $S_{s,G}$ for an arbitrary signal $s$ in class 1 since $G$ is the set of affine deformations.}.
% indicating that the subspace classifier uses the training samples to produce a subspace $\widehat{V}^{(1)}$  that contains the SCDT of the test signal $s$ and its cluster $S_{s,G}$ \footnote{Since the same phenomenon is observed for any random test signal following the generative model $S_{\varphi^{(1)},G}$, which equals $S_{s,G}$ for an arbitrary signal $s$ in class 1 since $G$ is the set of affine deformations.}.   
However the paths from the test signal to its projections onto the other two classes contain signals that belong to neither classes and seem to exhibit unpredicted patterns.

\begin{figure}[!h]
    \centering
    \includegraphics[width=0.48\textwidth]{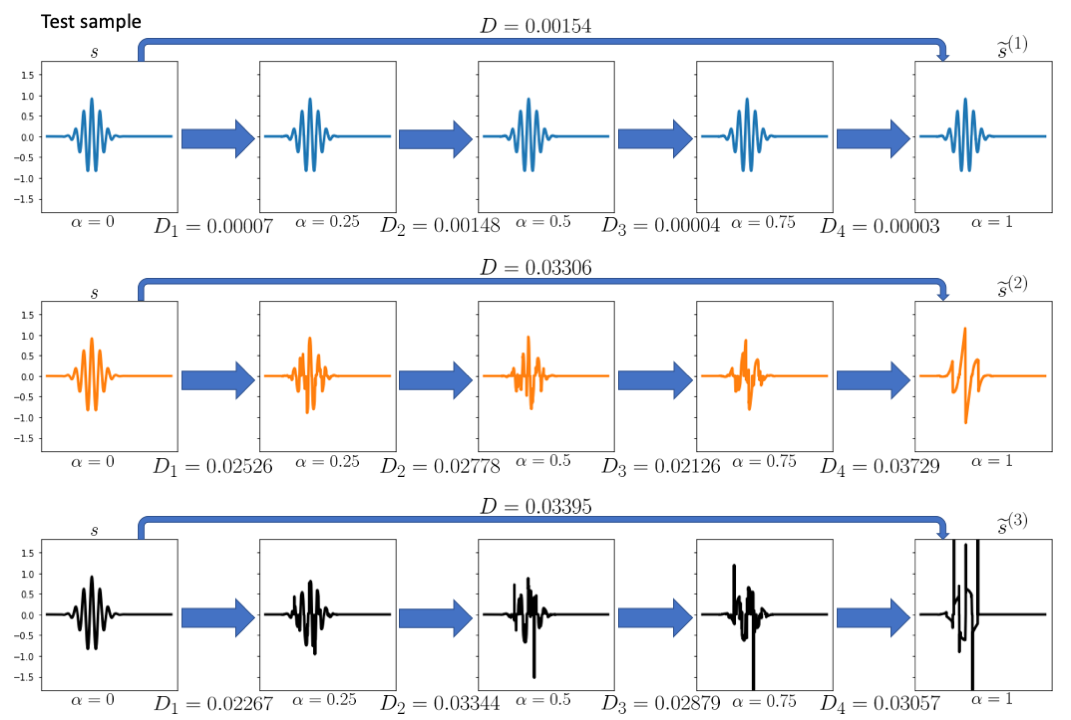}
    \vspace{-1.5em}
    \caption{Signal path $p_{\alpha}$ defined by eq. (\ref{eq:sz}) between a test sample $s$ from class 1 and its projections $\tilde s^{(1)},\tilde s^{(2)},\tilde s^{(3)}$ associated with the subspaces obtained by the subspace classifier \cite{rubaiyat2022nearest} corresponding to the three classes, respectively. Note that the distance $D$ between $s$ and $\widetilde s^{(1)}$ (i.e., $D_S(s,\widetilde s^{(1)})$) is the smallest, and hence the classifier predicts the correct label.}
    \vspace{-1.0em}
    \label{fig:toy_classify1}
\end{figure}

\begin{figure}[tb]
    \centering
    \includegraphics[width=0.49\textwidth]{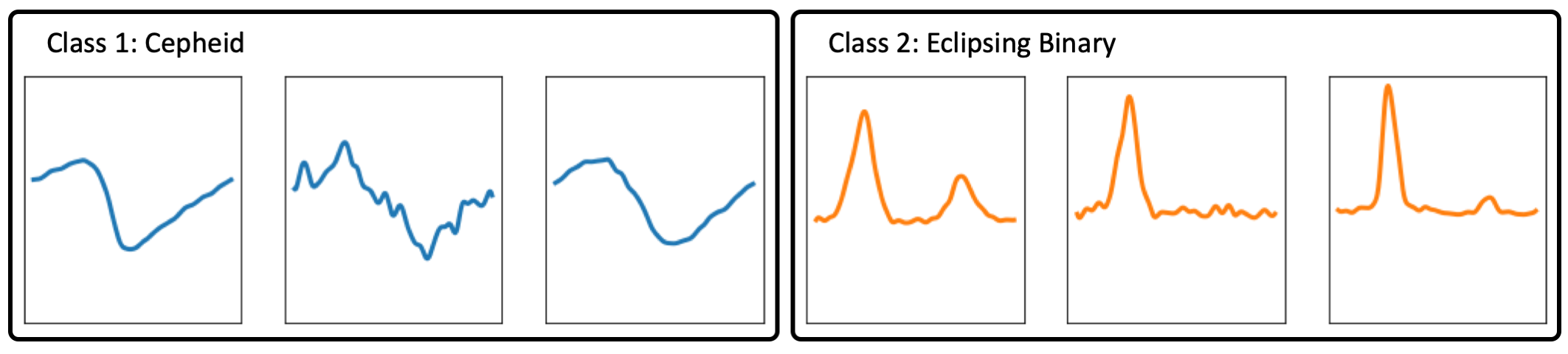}
    \vspace{-1.5em}
    \caption{Samples from two classes in the \textit{StarLightCurves} dataset}
    \label{fig:sig_data}
    \vspace{-1.0em}
\end{figure}

\textit{Experiment 2:} We conducted a similar experiment with a real dataset \textit{StarLghtCurves} \cite{rebbapragada2009finding}, where  two classes: `Cepheid' (class 1) and `Eclipsing Binary' (class 2) were used. Sample signals from these two classes are shown in Figure \ref{fig:sig_data}. Signal paths $p_{\alpha}$ are presented in Figure \ref{fig:real_classify1} from a test signal $s_j$ in each class ($j=1,2$) to their projections $\widetilde s^{(i)}_j$ corresponding to subspaces $\widehat{V}^{(i)}$ ($i= 1,2$) produced by the local subspace classifier proposed in \cite{rubaiyat2022nls}. We observe that the path from $s_j$ to  its projection $\widetilde s_j^{(j)}$ associated with the same class resembles a geodesic within class $j$ (see the first and the fourth row in Figure \ref{fig:real_classify1}) while the path from $s_j$ to its projection $\widetilde s_j^{(i)}$ associated with a different class $i\neq j$ contains very noisy signals which seem to belong to neither classes (see the second and third row of Figure \ref{fig:real_classify1}).

\begin{figure}[!h]
    \centering
    \includegraphics[width=0.49\textwidth]{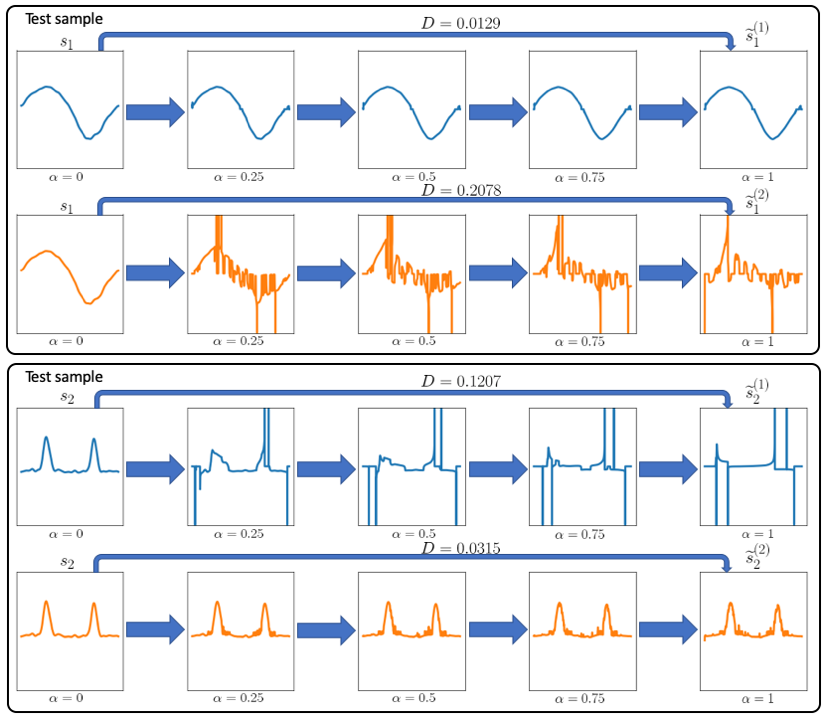}
    \vspace{-1.5em}
    \caption{Signal path $p_{\alpha}$ defined by eq. (\ref{eq:sz}) between a test sample $s_j$  and its projections $\tilde s^{(i)}_j$ ($i,j=1,2$) in the local subspaces obtained by the subspace classifier \cite{rubaiyat2022nls} corresponding to two classes, respectively. Note that in the top panel the distance $D$ between $s_1$ and $\widetilde s^{(1)}_1$ (i.e., $D_S(s_1,\widetilde s^{(1)}_1)$) is smaller, and in the bottom panel the distance $D$ between $s_2$ and $\widetilde s^{(2)}_2$ is smaller. Hence the local classifier predicts the correct labels in both cases.}
    
    \label{fig:real_classify1}
\end{figure}

\begin{remark}  
    Given the stark differences between the path from a signal to its projection associated with a subspace of its own class and that to its projection associated with a subspace of a different class, one may consider using the length of the path (see e.g., $\sum\limits_{i}D_i$ in Figure \ref{fig:toy_classify1}) as the classification metric, in contrast to the $D_S$ distance between the signal and its projection used in the aforementioned subspace classifiers, which we leave to future work.
\end{remark}

\section{Conclusion}\label{sec:discussion}
In this preliminary study, we look into the geometry of time series with respect to a generalized Wasserstein metric and its embedding in a linear
space. In particular, a geodesic may not exist between two arbitrary time series in $S$ but exists between signals within the same generative cluster (see \eqref{eq:genmodel}).  We utilize this knowledge to show whether the (local) subspaces formed by the training samples via the SCDT subspace classifiers do a good job in modeling  the signal classes conforming to certain generative modeling assumptions. We illustrate it by visualizing the paths between random test signals and their projections onto the corresponding subspaces associated with different classes and observe that there is a  path resembling a geodesic from a test signal to its projection in the subspace associated with the same class of the test signal. This shows some preliminary evidence that the classifiers produce  a ``good" subspace that models the corresponding generative clusters to which the test signals belong. In the future, we are interested  in  quantitative (numerical) characterization of geodesics for real/noisy data in the hope of building more robust models and classifiers for time series events.

\nocite{langley00}

\bibliography{ref}

\begin{thebibliography}{24}
\providecommand{\natexlab}[1]{#1}
\providecommand{\url}[1]{\texttt{#1}}
\expandafter\ifx\csname urlstyle\endcsname\relax
  \providecommand{\doi}[1]{doi: #1}\else
  \providecommand{\doi}{doi: \begingroup \urlstyle{rm}\Url}\fi

\bibitem[Abanda et~al.(2019)Abanda, Mori, and Lozano]{abanda2019review}
Abanda, A., Mori, U., and Lozano, J.~A.
\newblock A review on distance based time series classification.
\newblock \emph{Data Mining and Knowledge Discovery}, 33\penalty0 (2):\penalty0
  378--412, 2019.

\bibitem[Aldroubi et~al.(2021)Aldroubi, Li, and
  Rohde]{aldroubi2021partitioning}
Aldroubi, A., Li, S., and Rohde, G.~K.
\newblock Partitioning signal classes using transport transforms for data
  analysis and machine learning.
\newblock \emph{Sampling Theory, Signal Processing, and Data Analysis},
  19\penalty0 (1):\penalty0 1--25, 2021.

\bibitem[Aldroubi et~al.(2022)Aldroubi, Martin, Medri, Rohde, and
  Thareja]{aldroubi2022signed}
Aldroubi, A., Martin, R.~D., Medri, I., Rohde, G.~K., and Thareja, S.
\newblock The signed cumulative distribution transform for 1-\uppercase{D}
  signal analysis and classification.
\newblock \emph{Foundations of Data Science}, 2022.

\bibitem[Arjovsky et~al.(2017)Arjovsky, Chintala, and
  Bottou]{arjovsky2017wasserstein}
Arjovsky, M., Chintala, S., and Bottou, L.
\newblock Wasserstein generative adversarial networks.
\newblock In \emph{International conference on machine learning}, pp.\
  214--223. PMLR, 2017.

\bibitem[Basu et~al.(2014)Basu, Kolouri, and Rohde]{basu2014detecting}
Basu, S., Kolouri, S., and Rohde, G.~K.
\newblock Detecting and visualizing cell phenotype differences from microscopy
  images using transport-based morphometry.
\newblock \emph{Proceedings of the National Academy of Sciences}, 111\penalty0
  (9):\penalty0 3448--3453, 2014.

\bibitem[Bridson \& Haefliger(2013)Bridson and Haefliger]{bridson2013metric}
Bridson, M.~R. and Haefliger, A.
\newblock \emph{Metric spaces of non-positive curvature}, volume 319.
\newblock Springer Science \& Business Media, 2013.

\bibitem[Dau et~al.(2018)Dau, Keogh, Kamgar, Yeh, Zhu, Gharghabi,
  Ratanamahatana, Yanping, Hu, Begum, Bagnall, Mueen, Batista, and
  Hexagon-ML]{UCRArchive2018}
Dau, H.~A., Keogh, E., Kamgar, K., Yeh, C.-C.~M., Zhu, Y., Gharghabi, S.,
  Ratanamahatana, C.~A., Yanping, Hu, B., Begum, N., Bagnall, A., Mueen, A.,
  Batista, G., and Hexagon-ML.
\newblock The \uppercase{UCR} time series classification archive, October 2018.
\newblock \url{https://www.cs.ucr.edu/~eamonn/time\_series\_data\_2018/}.

\bibitem[Haker et~al.(2004)Haker, Zhu, Tannenbaum, and
  Angenent]{haker2004optimal}
Haker, S., Zhu, L., Tannenbaum, A., and Angenent, S.
\newblock Optimal mass transport for registration and warping.
\newblock \emph{International Journal of computer vision}, 60\penalty0
  (3):\penalty0 225--240, 2004.

\bibitem[Hamm et~al.(2022)Hamm, Henscheid, and Kang]{hamm2022wassmap}
Hamm, K., Henscheid, N., and Kang, S.
\newblock Wassmap: Wasserstein isometric mapping for image manifold learning.
\newblock \emph{arXiv preprint arXiv:2204.06645}, 2022.

\bibitem[Kolouri et~al.(2017)Kolouri, Park, Thorpe, Slepcev, and
  Rohde]{kolouri2017optimal}
Kolouri, S., Park, S.~R., Thorpe, M., Slepcev, D., and Rohde, G.~K.
\newblock Optimal mass transport: Signal processing and machine-learning
  applications.
\newblock \emph{IEEE signal processing magazine}, 34\penalty0 (4):\penalty0
  43--59, 2017.

\bibitem[Lines \& Bagnall(2015)Lines and Bagnall]{lines2015time}
Lines, J. and Bagnall, A.
\newblock Time series classification with ensembles of elastic distance
  measures.
\newblock \emph{Data Mining and Knowledge Discovery}, 29\penalty0 (3):\penalty0
  565--592, 2015.

\bibitem[Moosm{\"u}ller \& Cloninger(2020)Moosm{\"u}ller and
  Cloninger]{moosmuller2020linear}
Moosm{\"u}ller, C. and Cloninger, A.
\newblock Linear optimal transport embedding: Provable wasserstein
  classification for certain rigid transformations and perturbations.
\newblock \emph{arXiv preprint arXiv:2008.09165}, 2020.

\bibitem[Muskulus \& Verduyn-Lunel(2011)Muskulus and
  Verduyn-Lunel]{muskulus2011wasserstein}
Muskulus, M. and Verduyn-Lunel, S.
\newblock Wasserstein distances in the analysis of time series and dynamical
  systems.
\newblock \emph{Physica D: Nonlinear Phenomena}, 240\penalty0 (1):\penalty0
  45--58, 2011.

\bibitem[Ozolek et~al.(2014)Ozolek, Tosun, Wang, Chen, Kolouri, Basu, Huang,
  and Rohde]{ozolek2014accurate}
Ozolek, J.~A., Tosun, A.~B., Wang, W., Chen, C., Kolouri, S., Basu, S., Huang,
  H., and Rohde, G.~K.
\newblock Accurate diagnosis of thyroid follicular lesions from nuclear
  morphology using supervised learning.
\newblock \emph{Medical image analysis}, 18\penalty0 (5):\penalty0 772--780,
  2014.

\bibitem[Park et~al.(2018)Park, Kolouri, Kundu, and Rohde]{Park:18}
Park, S.~R., Kolouri, S., Kundu, S., and Rohde, G.~K.
\newblock The cumulative distribution transform and linear pattern recognition.
\newblock \emph{Applied and computational harmonic analysis}, 45:\penalty0
  616--641, 2018.

\bibitem[Rebbapragada et~al.(2009)Rebbapragada, Protopapas, Brodley, and
  Alcock]{rebbapragada2009finding}
Rebbapragada, U., Protopapas, P., Brodley, C.~E., and Alcock, C.
\newblock Finding anomalous periodic time series.
\newblock \emph{Machine learning}, 74\penalty0 (3):\penalty0 281--313, 2009.

\bibitem[Royden \& Fitzpatrick(1988)Royden and Fitzpatrick]{royden1988real}
Royden, H.~L. and Fitzpatrick, P.
\newblock \emph{Real analysis}, volume~32.
\newblock Macmillan New York, 1988.

\bibitem[{Rubaiyat} et~al.(2020){Rubaiyat}, {Hallam}, {Nichols}, {Hutchinson},
  {Li}, and {Rohde}]{Rubaiyat:20}
{Rubaiyat}, A. H.~M., {Hallam}, K.~M., {Nichols}, J.~M., {Hutchinson}, M.~N.,
  {Li}, S., and {Rohde}, G.~K.
\newblock Parametric signal estimation using the cumulative distribution
  transform.
\newblock \emph{IEEE Transactions on Signal Processing}, 68:\penalty0
  3312--3324, 2020.

\bibitem[Rubaiyat et~al.(2022{\natexlab{a}})Rubaiyat, Li, Yin, Rabbi, Zhuang,
  and Rohde]{rubaiyat2022nls}
Rubaiyat, A. H.~M., Li, S., Yin, X., Rabbi, M. S.~E., Zhuang, Y., and Rohde,
  G.~K.
\newblock End-to-end signal classification in signed cumulative distribution
  transform space.
\newblock \emph{arXiv preprint arXiv:2205.00348}, 2022{\natexlab{a}}.

\bibitem[Rubaiyat et~al.(2022{\natexlab{b}})Rubaiyat, Shifat-E-Rabbi, Zhuang,
  Li, and Rohde]{rubaiyat2022nearest}
Rubaiyat, A. H.~M., Shifat-E-Rabbi, M., Zhuang, Y., Li, S., and Rohde, G.~K.
\newblock Nearest subspace search in the signed cumulative distribution
  transform space for 1d signal classification.
\newblock In \emph{ICASSP 2022-2022 IEEE International Conference on Acoustics,
  Speech and Signal Processing (ICASSP)}, pp.\  3508--3512. IEEE,
  2022{\natexlab{b}}.

\bibitem[Rubner et~al.(2000)Rubner, Tomasi, and Guibas]{rubner2000earth}
Rubner, Y., Tomasi, C., and Guibas, L.~J.
\newblock The earth mover's distance as a metric for image retrieval.
\newblock \emph{International journal of computer vision}, 40\penalty0
  (2):\penalty0 99--121, 2000.

\bibitem[Santambrogio(2015)]{santambrogio2015optimal}
Santambrogio, F.
\newblock Optimal transport for applied mathematicians.
\newblock \emph{Birk{\"a}user, NY}, 55\penalty0 (58-63):\penalty0 94, 2015.

\bibitem[Shifat-E-Rabbi et~al.(2021)Shifat-E-Rabbi, Yin, Rubaiyat, Li, Kolouri,
  Aldroubi, Nichols, and Rohde]{shifat2020radon}
Shifat-E-Rabbi, M., Yin, X., Rubaiyat, A. H.~M., Li, S., Kolouri, S., Aldroubi,
  A., Nichols, J.~M., and Rohde, G.~K.
\newblock Radon cumulative distribution transform subspace modeling for image
  classification.
\newblock \emph{Journal of Mathematical Imaging and Vision}, pp.\  1--19, 2021.

\bibitem[Villani(2003)]{villani2003topics}
Villani, C.
\newblock \emph{Topics in Optimal Transportation}.
\newblock Number~58. American Mathematical Soc., 2003.

\end{thebibliography}
\bibliographystyle{icml2022}

%%%%%%%%%%%%%%%%%%%%%%%%%%%%%%%%%%%%%%%%%%%%%%%%%%%%%%%%%%%%%%%%%%%%%%%%%%%%%%%
%%%%%%%%%%%%%%%%%%%%%%%%%%%%%%%%%%%%%%%%%%%%%%%%%%%%%%%%%%%%%%%%%%%%%%%%%%%%%%%
% APPENDIX
%%%%%%%%%%%%%%%%%%%%%%%%%%%%%%%%%%%%%%%%%%%%%%%%%%%%%%%%%%%%%%%%%%%%%%%%%%%%%%%
%%%%%%%%%%%%%%%%%%%%%%%%%%%%%%%%%%%%%%%%%%%%%%%%%%%%%%%%%%%%%%%%%%%%%%%%%%%%%%%
\newpage
\appendix
\onecolumn

\section{Appendix}
\subsection{An additional  proposition}
\begin{proposition}\label{prop:geoconvex}
	Let $U\subseteq S$ such that $\widehat U$ is a convex subset of $\widehat S$. Then there exists a unique constant speed geodesic with respect to $D_S(\cdot,\cdot)$ between any two signals in $U$.
\end{proposition}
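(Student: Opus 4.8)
The plan is to push the problem forward through the isometric SCDT embedding, solve it in the ambient Hilbert space where geodesics are trivial, and then pull the solution back. Fix $s_1,s_2\in U$ and let $\widehat s_1,\widehat s_2\in\widehat U$ be their SCDTs. Because the norm on $(L^2(s_0)\times\R)^2$ is induced by an inner product, the straight segment $\alpha\mapsto\widehat p_\alpha:=(1-\alpha)\widehat s_1+\alpha\widehat s_2$ is the (unique, constant-speed) geodesic of the ambient space joining $\widehat s_1$ and $\widehat s_2$; by the hypothesis that $\widehat U$ is convex this segment stays inside $\widehat U$ for every $\alpha\in[0,1]$.

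First I would construct the candidate geodesic in $U$. Since the SCDT is invertible and restricts to a bijection $U\leftrightarrow\widehat U$, each $\widehat p_\alpha\in\widehat U$ has a unique preimage $p_\alpha\in U$, yielding a path $\alpha\mapsto p_\alpha$ with $p_0=s_1$ and $p_1=s_2$ (concretely, the path of the form \eqref{eq:sz}). To check it is a constant-speed geodesic I would invoke the embedding property \eqref{eq:scdtembedding}: for $\alpha_1,\alpha_2\in[0,1]$ one has $D_S(p_{\alpha_1},p_{\alpha_2})=\nm{\widehat p_{\alpha_1}-\widehat p_{\alpha_2}}_{(L^2(s_0)\times\R)^2}=|\alpha_1-\alpha_2|\,\nm{\widehat s_1-\widehat s_2}_{(L^2(s_0)\times\R)^2}=|\alpha_1-\alpha_2|\,D_S(s_1,s_2)$, which is exactly the constant-speed condition.

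For uniqueness I would argue in the reverse direction. If $q_\alpha$ is any constant-speed geodesic in $\big(S,D_S(\cdot,\cdot)\big)$ joining $s_1$ and $s_2$, then applying the isometry \eqref{eq:scdtembedding} shows that $\widehat q_\alpha$ satisfies $\nm{\widehat q_{\alpha_1}-\widehat q_{\alpha_2}}=|\alpha_1-\alpha_2|\,\nm{\widehat s_1-\widehat s_2}$, i.e.\ $\widehat q_\alpha$ is a constant-speed geodesic of the ambient space $(L^2(s_0)\times\R)^2$. Since that space is uniquely geodesic (\ref{cor:unigeo}), necessarily $\widehat q_\alpha=\widehat p_\alpha$, and injectivity of the SCDT then gives $q_\alpha=p_\alpha$.

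The main obstacle, and really the only place where care is needed, is the logical transfer of the notion of geodesic across the embedding: I must ensure that a constant-speed geodesic measured with the intrinsic metric $D_S$ maps to a constant-speed geodesic of the \emph{full} ambient norm, not merely of the subspace metric on $\widehat U$, so that the ambient uniqueness statement \ref{cor:unigeo} genuinely applies. This is exactly where the isometry \eqref{eq:scdtembedding} is essential, and where convexity of $\widehat U$ does double duty: it guarantees that the ambient straight-line geodesic already lies in $\widehat U$, so no competing geodesic can escape $U$ and the pulled-back path is genuinely valued in $U$. The remaining ingredient, the bijectivity and invertibility of the SCDT used for both the existence of $p_\alpha$ and the final identification $q_\alpha=p_\alpha$, is routine.
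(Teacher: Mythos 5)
Your proposal is correct and follows essentially the same route as the paper: push the pair forward through the isometric SCDT embedding, take the straight segment (which stays in $\widehat U$ by convexity), pull it back through the bijection, and verify the constant-speed condition via \eqref{eq:scdtembedding}. In fact you go one step further than the paper's own proof, which only constructs the geodesic and leaves uniqueness implicit; your explicit argument---that any constant-speed geodesic in $\big(S,D_S(\cdot,\cdot)\big)$ maps under the isometry to a constant-speed geodesic of the ambient space $(L^2(s_0)\times\R)^2$, hence coincides with the straight segment by \ref{cor:unigeo}, and pulls back to $p_\alpha$ by injectivity of the SCDT---is exactly the missing detail, and it is sound.
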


\begin{proof}
	Let $u,w\in U$. Since $\widehat U$ is convex, $(1-\alpha) \widehat u+\alpha \widehat w \in \widehat U \subseteq \widehat S$ for any $\alpha\in [0,1]$. Since the SCDT is a bijection between $S$ and $\widehat S$ (see \ref{rmk:bijection}), there exists a unique $u_{\alpha}\in U$ such that $\widehat u_{\alpha}= (1-\alpha) \widehat u+\alpha \widehat w$ for any $\alpha \in [0,1]$. Note that $u_0 = u$ and $u_1=w$.  By the embedding property \eqref{eq:scdtembedding} of the SCDT, we have that
\begin{align*}
	D_S(u_{\alpha_1},u_{\alpha_2})&= \nm{\widehat u_{\alpha_1}- \widehat u_{\alpha_2}} _{(L^2(s_0)\times \R)^2}\\
	&= |\alpha_1-\alpha_2|\nm{\widehat u - \widehat w}_{(L^2(s_0)\times \R)^2}\\
	& = |\alpha_1-\alpha_2|D_S(u,w),
\end{align*}
which shows that $\alpha \mapsto u_{\alpha}$ defines a constant speed geodesic in $U$.
\end{proof}
%You can have as much text here as you want. The main body must be at most $8$ pages long.
%For the final version, one more page can be added.
%If you want, you can use an appendix like this one, even using the one-column format.

\subsection{Push-forward of density functions}
The push-forward operator is more generally defined for measures. Here we present its analog for the associated density functions. Given a $L^1$- normalized density function $s:\Omega_s\rightarrow \R$ ($s
\geq 0$) and a (measurable) map $T:\R\rightarrow \R$, the push forward density function $T_{\sharp}s\in L^1(\R)$ is defined via \begin{equation}
   \int_{B} \big(T_{\sharp}s\big)(y)dy = \int_{T^{-1}(B)\cap \Omega_s} s(t)dt,
\end{equation}
for any measurable set $B\subseteq \R$.

\subsection{A characterization of the CDT}\label{append:charCDT}
The following fact is a corollary of Theorem 2.9 in \cite{santambrogio2015optimal}.
\begin{proposition}
	Given $s_1,s_2\in S_1$, there exists a unique optimal transport map $T^*$ (i.e., a map such that $T^*_{\sharp} s_1 = s_2$ which minimizes the Wasserstein-2 cost) between them given by the non-decreasing map
\begin{equation}
	T^* = F_{s_2}^{\dagger}\circ F_{s_1}.
\end{equation}
On the other hand, if $T^*:\Omega_{s_1}\rightarrow \Omega_{s_2}$ is a non-decreasing map such that $T^*_{\sharp} s_1 = s_2$, it follows that  $T^*$ is the optimal transport map between $s_1$ and $s_2$. 
\end{proposition}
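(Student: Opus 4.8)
The plan is to treat this as the classical one-dimensional optimal-transport fact that the monotone rearrangement is the unique Wasserstein-2 optimal map, and to identify that rearrangement explicitly with $F_{s_2}^{\dagger}\circ F_{s_1}$. First I would record the structural features of $S_1$ that make everything work: since $s_1,s_2$ are non-negative and $L^1$-normalized, the associated measures $\mu_i$ with $d\mu_i=s_i\,dx$ are probability measures, absolutely continuous with respect to Lebesgue measure and hence non-atomic, so their cumulative distribution functions $F_{s_1},F_{s_2}$ are continuous and non-decreasing; the finite second moments guarantee that the Wasserstein-2 cost between them is finite, so the transport problem is well posed.

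For the existence-and-form claim, I would first verify directly that $T^*:=F_{s_2}^{\dagger}\circ F_{s_1}$ satisfies $T^*_{\sharp}\mu_1=\mu_2$. This factors through the uniform measure on $[0,1]$: because $\mu_1$ is non-atomic, $F_{s_1}$ pushes $\mu_1$ forward to Lebesgue measure on $[0,1]$ (the probability integral transform), and the standard identity $F_{s_2}^{\dagger}(u)\le x \iff u\le F_{s_2}(x)$ shows that $F_{s_2}^{\dagger}$ pushes the uniform measure forward to $\mu_2$ (inverse-transform sampling). Composing the two gives $T^*_{\sharp}\mu_1=\mu_2$, and $T^*$ is non-decreasing as a composition of non-decreasing maps. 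Optimality then follows from the cited Theorem 2.9 of \cite{santambrogio2015optimal}: for the quadratic cost on $\R$ the optimal transport map exists, is the unique \emph{monotone} (non-decreasing) map with the prescribed marginals, and is therefore equal to the non-decreasing map $T^*$ we have just exhibited; uniqueness of the optimal map is part of the same theorem, since the cost $|x-y|^2$ corresponds to a strictly convex $h$.

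For the converse, suppose $T:\Omega_{s_1}\to\Omega_{s_2}$ is any non-decreasing map with $T_{\sharp}\mu_1=\mu_2$. It suffices to show $T=T^*$ $\mu_1$-almost everywhere, since then $T$ inherits the optimality of $T^*$. The pushforward condition reads $\mu_1(\{t:T(t)\le y\})=F_{s_2}(y)$ for every $y$; because $T$ is non-decreasing the sublevel set $\{t:T(t)\le y\}$ is an interval, so this equation pins down, for $\mu_1$-a.e. $t$, the level $F_{s_1}(t)$ at which $T$ crosses $y$, forcing $T(t)=F_{s_2}^{\dagger}(F_{s_1}(t))$ off a $\mu_1$-null set. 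Hence a non-decreasing transport map is essentially unique, coincides with $T^*$, and is thus optimal.

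I expect the main obstacle to be the bookkeeping with the generalized inverse $F^{\dagger}$, since $F_{s_2}$ need not be strictly increasing and so $F_{s_2}^{\dagger}$ exhibits flat-to-jump behavior: one must use the equivalence $F_{s_2}^{\dagger}(u)\le x \iff u\le F_{s_2}(x)$ together with the non-atomicity (equivalently, continuity) of $F_{s_1}$ carefully to obtain the exact pushforwards rather than merely almost-everywhere statements, and to justify the essential uniqueness in the converse on the null sets where $F_{s_1}$ is constant or $T$ jumps. The remainder is a direct invocation of the cited theorem.
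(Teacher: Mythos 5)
Your proposal is correct and takes essentially the same route as the paper, which offers no argument beyond observing that the proposition is a corollary of Theorem 2.9 in \cite{santambrogio2015optimal}. The details you supply---the probability integral transform, inverse-transform sampling via $F_{s_2}^{\dagger}(u)\le x \iff u\le F_{s_2}(x)$, and the $\mu_1$-a.e.\ uniqueness of non-decreasing transport maps---are exactly the standard content (and standard proof) of that cited theorem, so your write-up is simply a fleshed-out version of the paper's citation.
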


\subsection{The composition property of the CDT and SCDT}\label{appendix:composition}
Let $g:\R\rightarrow \R$ be a strictly increasing and differentiable function. Here we present the formulas for the transform (CDT or SCDT) of $s_g:= g^{\prime}s\circ g$ for some $s\in S$. 

For $s\in S_1$, the composition property for the CDT is given by \cite{Park:18}
\begin{equation}
	s_g^* = g^{-1}\circ s^*.
\end{equation}

For $s\in S$ such that $\nm{s^{+}}\neq 0$ and $\nm{s^{-}}\neq 0$,  the composition property for the SCDT is given by \cite{aldroubi2022signed}
\begin{equation}
	\widehat s_g = \Big(g^{-1}\circ (s^{+})^*, \nm{s^{+}}, g^{-1}\circ (s^{-})^*,\nm{s^-}\Big).
\end{equation}

Similarly, for $s$ such that $\widehat s =\Big ((s^{+})^*, \nm{s^{+}},0,0\Big)$ or $\widehat s = (0,0,(s^{-})^*, \nm{s^{-}})$ or $s=0$, $\widehat s_g$ is given by $\Big(g^{-1}\circ (s^{+})^*, \nm{s^{+}},0,0\Big)$, $\Big(0,0,g^{-1}\circ(s^{-})^*, \nm{s^{-}}\Big)$ and $(0,0,0,0)$, respectively.

\subsection{On invertibility of the SCDT}\label{append:invSCDT}

		Let $s_0\in S_1$ be a $L^1$-normalized positive signal supported on $\Omega_{0}$ and let  $s\in S$ . Then by the definition of the CDT and Jordan decomposition, it is not hard to see that $(s^+)^*_{\sharp}s_0 \perp (s^-)^*_{\sharp}s_0$ since $(s^+)^*_{\sharp}s_0 =s^+ $ and  $(s^-)^*_{\sharp}s_0 =s^- $. Here ``$\perp$" means that the measure with density $(s^+)^*_{\sharp}s_0$ and the measure with density $(s^-)^*_{\sharp}s_0$ are mutually singular. In particular, there 
		exists $\Omega_{+}$ and	 $\Omega_{-}$ 		such that $\Omega_s = \Omega_{+}\cup \Omega_{-}$ with $\int_{\Omega_{-}}s^+(x)dx =0 $ and $\int_{\Omega_{+}}s^-(x)dx =0$.
	
		%$s^{+}\big((s^+)^*(\alpha)\big) s^{-}\big((s^-)^*(\alpha)\big)=0$, i.e., $(s^+)^*$ and $(s^+)^*$ maps $\Omega_0$ to   $\widehat s = \big((s^+)^*,\nm{s^+},(s^-)^*,\nm{s^-}\big)$, 

The following fact is a special case of Theorem 2.7 in \cite{aldroubi2022signed}.
\begin{proposition}\label{prop:invSCDT}
Given a fixed a $L^1$-normalized positive signal reference signal $s_0\in S_1$ supported on $\Omega_{0}$. Then for any tuple $(f,a,g,b)$ satisfying 
\begin{equation}
	f_{\sharp} s_0 \perp g_{\sharp} s_0,
\end{equation}
where $f,g\in L^2(s_0) $ are non-decreasing $s_0$-a.e. and $a,b>0$, there is a unique $s\in S$ such that $\widehat s = (f,a,g,b)$ given by the following inverse formula
\begin{equation}
	s = af_{\sharp}s_0- bg_{\sharp}s_0.
\end{equation}
Moreover, the unique inverses for  tuples of the forms $(f,a,0,0)$, $(0,0,g,b)$ and $(0,0,0,0)$ are $af_{\sharp} s_0$, $bg_{\sharp} s_0$ and the zero signal respectively. 

\begin{remark}\label{rmk:bijection}
	From the above proposition, we see that the SCDT defines a bijection from $S$ to $\widehat S$ via $s\mapsto \widehat s$ and $\widehat S = \{\widehat s: s\in S\} = \{(f,a,g,b): f,g\in L^2(s_0) ~\textrm{non-decreasing}~s_0~\textrm{-a.e.}, f_{\sharp} s_0 \perp g_{\sharp} s_0, a,b>0 \}\cup \{(f,a,0,0): f\in L^2(s_0)~\textrm{non-decreasing}~s_0~\textrm{-a.e.}, a>0\}\cup\{(0,0,g,b): g\in L^2(s_0)~\textrm{non-decreasing}~s_0~\textrm{-a.e.}, b>0\}\cup \{(0,0,0,0)\}$.
\end{remark}
%	with $s^{+}, s^{-}\neq 0$
%	$\mathcal E_{s_0}:=\{(f,a,g,b): f,g \in L^2(s_0)~ \text{s.t.}~ f(\Omega_0)\cap g(\Omega_0)=\emptyset, a,b>0 \}$. Then the SCDT operator $\mathbb T_{s_0}: S \rightarrow \mathcal E_{s_0}$
\end{proposition}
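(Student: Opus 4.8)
The plan is to exhibit the candidate signal $s := a f_\sharp s_0 - b g_\sharp s_0$, verify it belongs to $S$, show that its Jordan decomposition is precisely $s^+ = a f_\sharp s_0$ and $s^- = b g_\sharp s_0$, and then check that applying the forward SCDT to it returns the prescribed tuple; uniqueness will follow because the forward transform simply reads off the Jordan data. First I would confirm that $s\in S$. Since $f,g\in L^2(s_0)$ are non-decreasing $s_0$-a.e., their push-forwards $f_\sharp s_0$ and $g_\sharp s_0$ are non-negative $L^1$-normalized densities, and they have finite second moments because $\int y^2 (f_\sharp s_0)(y)\,dy = \int f(x)^2 s_0(x)\,dx = \nm{f}^2_{L^2(s_0)}<\infty$ (and likewise for $g$). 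Hence the finite combination $s$ lies in $S$.

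The crux is identifying the Jordan decomposition, and here I would invoke the singularity hypothesis $f_\sharp s_0\perp g_\sharp s_0$. It furnishes a partition $\R = \Omega_+\sqcup\Omega_-$ (up to null sets) on which $f_\sharp s_0$ and $g_\sharp s_0$ are respectively concentrated. On $\Omega_+$ we have $s = a f_\sharp s_0\ge 0$ with $g_\sharp s_0 = 0$, and on $\Omega_-$ we have $s = -b g_\sharp s_0\le 0$ with $f_\sharp s_0 = 0$, so the positive and negative parts separate cleanly. This yields $s^+ = a f_\sharp s_0$ and $s^- = b g_\sharp s_0$, and since both push-forwards are normalized, $\nm{s^+}=a$ and $\nm{s^-}=b$. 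I expect this to be the main obstacle: without mutual singularity the positive and negative contributions would overlap on a set of positive measure, and one could no longer equate $s^\pm$ with the two push-forward terms, so the entire argument rests on exploiting $\perp$ correctly.

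With the Jordan parts in hand I would compute $\widehat s$. Because $s^+/\nm{s^+} = f_\sharp s_0$ and $f$ is itself a non-decreasing map pushing $s_0$ onto $f_\sharp s_0$, the characterization of the CDT in \ref{append:charCDT} (any non-decreasing map sending $s_0$ to the target is the optimal transport map, hence the CDT) gives $(s^+)^* = f$; symmetrically $(s^-)^* = g$. Therefore $\widehat s = \big((s^+)^*,\nm{s^+},(s^-)^*,\nm{s^-}\big) = (f,a,g,b)$, which establishes existence.

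For uniqueness, suppose $\widetilde s\in S$ also satisfies $\widehat{\widetilde s} = (f,a,g,b)$. By the very definition of the SCDT its components encode the Jordan data, so $\nm{\widetilde s^+}=a$ and $(\widetilde s^+)^* = f$; inverting the CDT via $\widetilde s^+/\nm{\widetilde s^+} = \big((\widetilde s^+)^*\big)_\sharp s_0 = f_\sharp s_0$ forces $\widetilde s^+ = a\,f_\sharp s_0$, and identically $\widetilde s^- = b\,g_\sharp s_0$, whence $\widetilde s = \widetilde s^+-\widetilde s^- = s$. The three degenerate cases $(f,a,0,0)$, $(0,0,g,b)$ and $(0,0,0,0)$ are dispatched the same way, simply dropping the vanishing part and applying the single-CDT inversion (or the convention $\widehat 0 = (0,0,0,0)$), so I would handle them in one line.
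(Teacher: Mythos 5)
Your proof is correct, but there is nothing in the paper to compare it against line-by-line: the paper gives no proof of Proposition \ref{prop:invSCDT} at all, stating only that it is ``a special case of Theorem 2.7 in \cite{aldroubi2022signed}'' and delegating the argument to that reference. Your blind reconstruction supplies exactly the verification that the citation replaces, and it rests on the two correct pillars. First, since $a f_\sharp s_0$ and $b g_\sharp s_0$ are non-negative and mutually singular, uniqueness of the Jordan decomposition forces $s^+ = a f_\sharp s_0$ and $s^- = b g_\sharp s_0$; as you rightly stress, this is precisely where the hypothesis $f_\sharp s_0 \perp g_\sharp s_0$ is indispensable, since without it the positive and negative contributions could overlap and the identification would fail. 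Second, the characterization of the CDT in \ref{append:charCDT} (a non-decreasing map pushing $s_0$ onto the target is the optimal transport map) gives $(s^+)^* = f$ and $(s^-)^* = g$, so the forward transform returns $(f,a,g,b)$; uniqueness then follows by reading the Jordan data off the transform and inverting each CDT, as you do. Your observation that $\int y^2 (f_\sharp s_0)(y)\,dy = \nm{f}^2_{L^2(s_0)}$ is a nice bonus: it explains why $f,g\in L^2(s_0)$ is exactly the condition ensuring the reconstructed signal has finite second moments, i.e.\ lies in $S$. One caveat, inherited from the paper's density-level framework rather than introduced by your argument: the push-forward of $s_0$ by a merely non-decreasing map need not be absolutely continuous (a map constant on a set of positive $s_0$-measure creates an atom), so the statement implicitly restricts to tuples for which $f_\sharp s_0$ and $g_\sharp s_0$ are genuine densities; the measure-theoretic formulation of Theorem 2.7 in \cite{aldroubi2022signed} is what handles this in full generality.
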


\subsection{Length of a path and geodesics in metric spaces}
Let $(V,d)$ be a metric space and $p: [0,1]\rightarrow V$ be a curve (path) in $V$. Then the length of $p$ is defined by:
\begin{equation}
    \textrm{Len}(p):= \sup\Big\{\sum\limits_{i=0}^{n-1} d(p(\alpha_i),p(\alpha_{i+1})): n\geq 1, 0=\alpha_0<\alpha_1<\cdots<\alpha_{n}\Big\}.
\end{equation}

A curve $p: [0,1]\rightarrow V$ between $v$ and $w$ in $V$ is called a geodesic if it is a length-minimizing curve:
\begin{equation}
    \textrm{Len}(p) = \min\{\textrm{Len}(\gamma): \gamma:[0,1]\rightarrow V, \gamma(0)=v, \gamma(1)=w\}.
\end{equation}
Moreover, it is called a
constant-speed geodesic if \begin{equation}
    d(p(\alpha),p(\beta))=|\alpha-\beta|d(p(0),p(1)), \quad \forall \alpha,\beta \in [0,1].
\end{equation}
% where $p(0)=v_0$ and $p(1)=v_1$.

A metric space  $(V,d)$ is said to be a geodesic space if for any $v,w \in V$, there exists a curve $p: [0,1]\rightarrow V$ with $p(0)=v$ and $p(1)=w$ such that
$d(v,w)= \textrm{Len}(p)$. Indeed, this curve $p$ is a geodesic between $v$ and $w$.

\subsection{Uniqueness of geodesics in normed spaces with strictly convex norms}
We present the following property of strictly convex spaces (see Proposition 1.6 in \cite{bridson2013metric}):
\begin{lemma}
	Every normed vector space $V$ is a geodesic space. It is uniquely geodesic if and only if the unit ball in $V$ is strictly convex (in the sense that if $u$ and $w$ are distinct vectors of norm $1$, then $\nm{(1-\alpha)u+zw}<1$ for all $\alpha\in (0,1)$. The geodesic between $v,w\in V$ is given by $\alpha\mapsto (1-\alpha)v+\alpha w$ for $\alpha\in [0,1]$.

\end{lemma}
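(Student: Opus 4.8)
The plan is to prove the two assertions separately: first that straight-line segments realize distances (so $V$ is geodesic), then the strict-convexity characterization of uniqueness. Throughout I read the parenthetical condition as the convex combination $\nm{(1-\alpha)u+\alpha u'}<1$ for distinct unit vectors $u\ne u'$.

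First I would certify the candidate path $p(\alpha)=(1-\alpha)v+\alpha w$. For any partition $0=\alpha_0<\cdots<\alpha_n=1$ one has $p(\alpha_i)-p(\alpha_{i+1})=(\alpha_{i+1}-\alpha_i)(v-w)$, so each increment has norm $(\alpha_{i+1}-\alpha_i)\nm{v-w}$ and the partition sum telescopes to exactly $\nm{v-w}$; hence $\mathrm{Len}(p)=\nm{v-w}=d(v,w)$, independently of the partition. Conversely, the trivial partition $\{0,1\}$ already gives $\mathrm{Len}(\gamma)\ge d(\gamma(0),\gamma(1))=d(v,w)$ for every curve $\gamma$ from $v$ to $w$. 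Thus $p$ is length-minimizing and $V$ is a geodesic space.

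For the implication \emph{strictly convex $\Rightarrow$ uniquely geodesic}, let $\gamma$ be any geodesic from $v$ to $w$, so $\mathrm{Len}(\gamma)=\nm{v-w}$. Invoking additivity of the length functional over subintervals together with the triangle-inequality lower bound on each piece, I would show every point $x=\gamma(t)$ satisfies the equality case $\nm{x-v}+\nm{w-x}=\nm{w-v}$. The crux is a short lemma: in a strictly convex space, if $\nm{a}+\nm{b}=\nm{a+b}$ with $a,b\ne 0$, then $a/\nm{a}=b/\nm{b}$; indeed, were these distinct unit vectors, then $\frac{a+b}{\nm{a}+\nm{b}}$ would be a nontrivial convex combination of them and hence have norm $<1$, contradicting $\nm{a+b}=\nm{a}+\nm{b}$. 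Applying this with $a=x-v$, $b=w-x$ forces $x$ onto the segment $[v,w]$; by continuity of $\gamma$ the image is exactly $[v,w]$, so the geodesic is unique and its constant-speed parametrization is the stated affine one. For the converse I would argue the contrapositive. If strict convexity fails there exist distinct unit vectors $u\ne u'$ and $\alpha_0\in(0,1)$ with $\nm{(1-\alpha_0)u+\alpha_0 u'}=1$; since $\phi(t):=\nm{(1-t)u+tu'}$ is convex, bounded by $1$ on $[0,1]$, and equal to $1$ at $0,\alpha_0,1$, a brief convexity estimate upgrades this to $\phi\equiv 1$, so the whole segment $[u,u']$ lies on the unit sphere. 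Then between $0$ and $u+u'$ (of norm $2$, as the midpoint $\frac{u+u'}{2}$ is a unit vector) the straight segment and the broken path $0\to u\to u+u'$ both have length $2=d(0,u+u')$, hence both are geodesics; as $u\notin[0,u+u']$ their images differ, so $V$ is not uniquely geodesic.

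I expect the main obstacle to be the $\Leftarrow$ direction: correctly invoking additivity of length over subintervals and establishing the equality case of the triangle inequality in a strictly convex norm, which is what pins every geodesic point onto the segment. Some care is also needed about the meaning of \emph{uniquely geodesic}, i.e.\ uniqueness of the image (equivalently, of the constant-speed parametrization) rather than of arbitrary reparametrizations, so that the affine formula asserted in the lemma is exactly what gets certified.
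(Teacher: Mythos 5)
Your proof is correct, but note that the paper does not actually prove this lemma: it is quoted verbatim (typo included) from the literature, with a pointer to Proposition 1.6 of \cite{bridson2013metric}, so there is no in-paper argument to compare against. What you have written is essentially the standard Bridson--Haefliger argument, reconstructed in full: the telescoping computation showing every partition sum along the affine path equals $\nm{v-w}$ (so $V$ is geodesic under the paper's sup-over-partitions definition of length); the reduction, via additivity of length over subintervals plus the triangle-inequality lower bound on each piece, of any geodesic point $x=\gamma(t)$ to the equality case $\nm{x-v}+\nm{w-x}=\nm{w-v}$; the equality-case lemma (if $\nm{a}+\nm{b}=\nm{a+b}$ with $a,b\neq 0$ then $a/\nm{a}=b/\nm{b}$, since otherwise $\frac{a+b}{\nm{a}+\nm{b}}$ is a nontrivial convex combination of distinct unit vectors) pinning $x$ to the segment; and, for the converse, the convexity upgrade $\phi\equiv 1$ showing a flat segment on the unit sphere, from which the straight path and the broken path $0\to u\to u+u'$ give two distinct geodesics between $0$ and $u+u'$. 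You also correctly repaired the statement's typo by reading the condition as $\nm{(1-\alpha)u+\alpha u'}<1$, and your caveat about what ``uniquely geodesic'' means (uniqueness of image, equivalently of the constant-speed parametrization) is exactly the right one given that the paper's appendix defines geodesics as length-minimizing curves, which are never literally unique under reparametrization.

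Two minor loose ends, neither fatal. First, in the converse you assert both paths are geodesics but do not parametrize the broken one; if uniqueness is read at the level of constant-speed geodesics, it is worth noting that since the whole segment $[u,u']$ lies on the sphere, every nonnegative combination satisfies $\nm{au+bu'}=a+b$, so the natural parametrization $\gamma(t)=2tu$ on $[0,\tfrac12]$, $\gamma(t)=u+(2t-1)u'$ on $[\tfrac12,1]$ is itself constant-speed, and $u\notin[0,u+u']$ (else $u=\tfrac{u+u'}{2}$, forcing $u=u'$), so uniqueness fails in the strong sense too. Second, the additivity of $\mathrm{Len}$ over subintervals is invoked rather than proved; it follows directly from the paper's definition by refining partitions to include the split point, and deserves one line if this proof were to be written out.
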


\begin{corollary}\label{cor:unigeo}
Let $s_0\in S_1$. The space $(L^2(s_0)\times \R)^2=\big\{(f,a,g,b): f,g \in L^2(s_0), a,b\in \R\big\}$ of tuples endowed with the $2$-norm $\nm{(f,a,g,b)}_2 := \sqrt{\nm{f}^2_{L^2(s_0)}+a^2+\nm{g}^2_{L^2(s_0)}+b^2}$ is uniquely geodesic, with the unique geodesic between $(f_0,a_0,g_0,b_0)$ and   $(f_1, a_1,g_1,b_1)$ given by $f_{\alpha} = (1-\alpha)(f_0,a_0,g_0,b_0) + \alpha (f_1, a_1,g_1,b_1)$ with $\alpha\in [0,1]$.
\end{corollary}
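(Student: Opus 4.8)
The plan is to reduce the statement entirely to the Lemma stated immediately before the corollary, which asserts that a normed vector space is uniquely geodesic precisely when its unit ball is strictly convex, and that in the uniquely geodesic case the geodesic between $v$ and $w$ is the affine segment $\alpha\mapsto(1-\alpha)v+\alpha w$. Thus, once strict convexity of the unit ball of $\big(L^2(s_0)\times\R\big)^2$ under $\nm{\cdot}_2$ is established, both conclusions of the corollary follow immediately. The single key observation driving the whole argument is that $\nm{\cdot}_2$ is induced by an inner product.

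First I would exhibit the inner product explicitly, namely
\[
\langle (f_1,a_1,g_1,b_1),(f_2,a_2,g_2,b_2)\rangle := \langle f_1,f_2\rangle_{L^2(s_0)} + a_1a_2 + \langle g_1,g_2\rangle_{L^2(s_0)} + b_1b_2,
\]
and note that $\nm{(f,a,g,b)}_2^2$ equals this form evaluated on the diagonal, so that $\big(L^2(s_0)\times\R\big)^2$ is a (Hilbert) inner product space. To verify strict convexity, I would take two distinct unit vectors $u,w$ and expand by bilinearity:
\[
\nm{(1-\alpha)u+\alpha w}_2^2 = (1-\alpha)^2 + 2\alpha(1-\alpha)\langle u,w\rangle + \alpha^2 .
\]
The crux is the Cauchy--Schwarz inequality $\langle u,w\rangle \le \nm{u}_2\,\nm{w}_2 = 1$, together with its equality analysis: equality forces $w$ to be a nonnegative multiple of $u$, and since both have unit norm this would give $w=u$, contradicting $u\neq w$. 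Hence $\langle u,w\rangle < 1$ strictly, and substituting this into the display above yields
\[
\nm{(1-\alpha)u+\alpha w}_2^2 < (1-\alpha)^2 + 2\alpha(1-\alpha) + \alpha^2 = 1
\]
for every $\alpha\in(0,1)$, which is exactly the strict convexity condition of the Lemma.

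The main (and really only) delicate point is the equality case of Cauchy--Schwarz: one must rule out the degenerate possibility $\langle u,w\rangle = 1$ rather than merely using the non-strict inequality, since otherwise the computation only gives $\le 1$. Once the inner-product structure is recognized this is routine, and with strict convexity in hand the Lemma supplies both that the space is uniquely geodesic and that the unique geodesic between $(f_0,a_0,g_0,b_0)$ and $(f_1,a_1,g_1,b_1)$ is the stated linear interpolation, completing the proof.
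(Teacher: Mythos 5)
Your proof is correct and follows exactly the route the paper intends: Corollary \ref{cor:unigeo} is stated as an immediate consequence of the preceding Lemma, and the only content to supply is strict convexity of the unit ball, which you verify correctly via the inner-product structure and the equality case of Cauchy--Schwarz. The paper leaves this verification implicit, so your write-up simply fills in the same argument in detail.
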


\subsection{Geodesics between the zero signal and an arbitrary signal}\label{sec:zerogeod}

Let $s\in S$. There exists a unique constant-speed geodesic in $\big(S, D_S(\cdot,\cdot)\big)$ between $0$ and $s$  given by $p_{\alpha}(t)= s(\frac{1}{\alpha} t)$ for $\alpha\in (0,1]$ and $p_0:= 0$.  By direct computation, for  $\alpha_1,\alpha_2\in (0,1]$
\begin{align}
    D_S^2(p_{\alpha_1},p_{\alpha_2})&= \nm{(p_{\alpha_1}^{+})^*-p_{\alpha_2}^{+})^*}^2 + \nm{(p_{\alpha_1}^{-})^*-p_{\alpha_2}^{-})^*}^2 + \Big(\nm{(p_{\alpha_1}^{+}}-\nm{(p_{\alpha_2}^{+}}\Big)^2 + \Big(\nm{(p_{\alpha_1}^{-}}-\nm{(p_{\alpha_2}^{+}}\Big)^2\nonumber\\
    &= \nm{\alpha_1(s^{+})^*-\alpha_2(s^{+})^*}^2_{L^2(s_0)}+\nm{\alpha_1(s^{-})^*-\alpha_2(s^{-})^*}^2_{L^2(s_0)} + \Big(\alpha_1\nm{(s^{+}}-\alpha_2\nm{s^{+}}\Big)^2 + \Big(\alpha_1\nm{s^{-}}-\alpha_2\nm{s^{-}}\Big)^2\nonumber\\
    &  = (\alpha_1-\alpha_2)^2 \Big(\nm{(s^{+})^*}^2_{L^2(s_0)} + \nm{(s^{-})^*}^2_{L^2(s_0)} + \nm{s^{+}}^2 + \nm{s^{-}}^2\Big) \nonumber\\
    & = (\alpha_1-\alpha_2)^2\norm\Big{\Big(0,0,0,0\Big)-\Big((s^{+})^*,  \nm{s^{+}}, (s^{-})^*, \nm{s^{-}}\Big)}_{(L^2(s_0)\times \R)^2}^2\nonumber\\ 
    &= (\alpha_1-\alpha_2)^2D_S^2(0,s)\nonumber,
\end{align}
where the second equality follows from the composition property of the SCDT. Similarly one  can verify that $D_S(0,p_{\alpha}) = \alpha  D_S(0,s)$ for $\alpha\in [0,1]$.

%%%%%%%%%%%%%%%%%%%%%%%%%%%%%%%%%%%%%%%%%%%%%%%%%%%%%%%%%%%%%%%%%%%%%%%%%%%%%%%
%%%%%%%%%%%%%%%%%%%%%%%%%%%%%%%%%%%%%%%%%%%%%%%%%%%%%%%%%%%%%%%%%%%%%%%%%%%%%%%

\end{document}